\definecolor{yxc}{RGB}{255,0,0}
\definecolor{yjc}{RGB}{125,0,0}
\definecolor{ytw}{RGB}{255,0,127}
\definecolor{gen}{RGB}{0,0,200}
\newcommand{\gen}[1]{\textcolor{gen}{[Gen: #1]}}
\newcommand{\defn}{:=}
\newcommand{\cS}{\mathcal{S}}
\newcommand{\cA}{\mathcal{A}}
\newcommand{\mymid}{\,|\,}
\newcommand{\taugap}{\Delta_{\mathsf{gap}}}
\title{Sample-Efficient Reinforcement Learning Is Feasible for \\ Linearly Realizable MDPs with Limited Revisiting}
\author{Gen Li\thanks{Department of Electrical and Computer Engineering, Princeton University, Princeton, NJ 08544, USA.}  \\
Princeton    \\
\and
	Yuxin Chen\footnotemark[1]\\
 Princeton  \\
	\and
	Yuejie Chi\thanks{Department of Electrical and Computer Engineering, Carnegie Mellon University, Pittsburgh, PA 15213, USA.}\\
	CMU\\
	\and
	Yuantao Gu\thanks{Department of Electronic Engineering, Tsinghua University, Beijing 100084, China.} \\
	Tsinghua   \\
	\and
	Yuting Wei\thanks{Department of Statistics and Data Science, The Wharton School, University of Pennsylvania, Philadelphia, PA 19104, USA.}\\
	UPenn	
	}
\date{May 2021;~~ Revised: October 2021}
\begin{document}

\theoremstyle{plain} \newtheorem{lemma}{\textbf{Lemma}}\newtheorem{proposition}{\textbf{Proposition}}\newtheorem{theorem}{\textbf{Theorem}}\newtheorem{assumption}{\textbf{Assumption}}

\theoremstyle{remark}\newtheorem{remark}{\textbf{Remark}}

\maketitle

\begin{abstract}

Low-complexity models such as linear function representation play a pivotal role in enabling sample-efficient reinforcement learning (RL).  
The current paper pertains to a scenario with value-based linear representation, which postulates linear realizability of the optimal Q-function (also called the ``linear $Q^{\star}$ problem''). While linear realizability alone does not allow for sample-efficient solutions in general, the presence of a large sub-optimality gap is a potential game changer, depending on the sampling mechanism in use.  Informally, sample efficiency is achievable with a large sub-optimality gap when a generative model is available, but is unfortunately infeasible when we turn to standard online RL settings.  

	In this paper, we make progress towards understanding this linear $Q^{\star}$ problem by investigating a new sampling protocol, which draws samples in an online/exploratory fashion but allows one to backtrack and revisit previous states in a {\em controlled} and {\em infrequent} manner. This protocol is more flexible than the standard online RL setting, while being practically relevant and far more restrictive than the generative model. We develop an algorithm tailored to this setting, achieving a sample complexity that scales polynomially with the feature dimension, the horizon, and the inverse sub-optimality gap, but not the size of the state/action space. Our findings underscore the fundamental interplay between sampling protocols and low-complexity function representation in RL.\\   

\end{abstract}

\noindent \textbf{Keywords:} reinforcement learning, linearly realizable optimal Q-functions, sub-optimality gap, state revisiting, sample efficiency



\section{Introduction}
\label{sec:intro}

Emerging reinforcement learning (RL) applications necessitate the design of sample-efficient solutions in order to accommodate the explosive growth of problem dimensionality.  
Given that the state space  and the action space  could both be unprecedentedly enormous,   
it is often infeasible to request a sample size exceeding the fundamental limit set forth by the ambient dimension in the tabular setting (which enumerates all combinations of state-action pairs). 
As a result, the quest for sample efficiency cannot be achieved in general without exploiting proper low-complexity structures underlying the problem of interest.

\subsection{Linear function approximation}

Among the studies of low-complexity models for RL,  linear function approximation has attracted a flurry of recent activity,  
mainly due to the promise of dramatic dimension reduction in conjunction with its mathematical tractability (see, e.g., \cite{bertsekas1995neuro,wen2017efficient,yang2019sample,jin2020provably,du2019good} and the references therein). 
Two families of linear function approximation merit particular attention, which we single out below. Here and throughout, we concentrate on a finite-horizon Markov decision process (MDP), 
and denote by $\cS$, $\cA$ and $H$ its state space,  action space, and horizon, respectively. 
\begin{itemize}
	\item {\em Model-based linear representation.}   \citet{yang2019sample,jin2020provably,yang2020reinforcement} studied a stylized scenario when both the probability transition kernel and the reward function of the MDP can be linearly parameterized.  
This type of MDPs is commonly referred to as linear MDPs. Letting $P_h(\cdot \mymid s,a)$ denote the transition probability from state $s$ when action $a$ is executed at time step $h$, the linear MDP model postulates the existence of a set of predetermined $d$-dimensional feature vectors $\{\varphi_h(s,a)\in \mathbb{R}^d\}$ and a collection of unknown parameter matrices $\{{\mu}_h \in \mathbb{R}^{d \times |\cS|}\}$  such that
\begin{align}
	\forall (s,a) \in \cS\times \cA \text{ and } 1\leq h\leq H: \qquad P_h(\cdot \mymid s,a) = \big( \varphi_h(s,a)  \big)^{\top}  {\mu}_h. 
	\label{eq:linear-MDP-intro}
\end{align}
Similar assumptions are imposed on the reward function as well. In other words, the linear MDP model posits that the probability transition matrix is low-rank (with rank at most $d$) with the column space known {\em a priori}, which forms the basis for sample size saving in comparison to the unstructured setting.

	\item {\em Value-based linear realizability.} Rather than relying on linear embedding of the model specification (namely, the transition kernel and the reward function), another class of linear representation assumes that the action-value function (or Q-function) can be well predicted by linear combinations of  known feature vectors $\{\varphi_h(s,a)\in \mathbb{R}^d\}$.  A concrete framework of this kind  assumes linear realizability of the optimal Q-function (denoted by  $Q_h^{\star}$ at time step $h$ from now on), that is, there exist some unknown vectors $\{\theta_h^{\star} \in \mathbb{R}^d\}$ such that
\begin{align}
	\qquad Q_h^{\star}(s,a) = \big\langle \varphi_h(s,a), \theta_h^{\star}  \big\rangle 
	\label{eq:linear-Qstar-intro}
\end{align}
holds for any state-action pair $(s,a)$ and any step $h$. It is self-evident that this framework seeks to represent the optimal Q-function --- which is an $|\cS||\cA|H$-dimensional object --- via $H$ parameter vectors each of dimension $d$.    
 Throughout this work, an MDP obeying this condition is said to be {\em an MDP with linearly realizable optimal Q-function}, or more concisely, {\em an MDP with linear $Q^{\star}$}. 

\end{itemize}

\subsection{Sample size barriers with linearly realizable $Q^{\star}$ }

The current paper focuses on MDPs with linearly realizable optimal Q-function $Q^{\star}$. In some sense, this is arguably the weakest assumption one can hope for; that is, if linear realiability of $Q^{\star}$ does not hold, then a linear function approximation should perhaps not be adopted in the first place.
In stark contrast to linear MDPs that allow for sample-efficient RL (in the sense that the sample complexity is almost independent of $|\cS|$ and $|\cA|$ but instead depends only polynomially on $d,H$ and possibly some sub-optimality gap), MDPs with linear $Q^{\star}$ do not admit a similar level of sample efficiency in general.  
To facilitate discussion, we summarize key existing results for a couple of settings. Here and below, the notation $f(d) = \Omega(g(d))$ means that $f(d)$ is at least on the same order as $g(d)$ when $d$ tends to infinity.   
\begin{itemize}
\item {\em Sample inefficiency under a generative model.}
 Even when a generative model or a simulator is available --- so that the learner can query arbitrary state-action pairs to draw samples from \citep{kearns1999finite} --- one can find a hard MDP instance in this class that requires at least $\min \big\{ \exp(\Omega(d)), \exp(\Omega(H)) \big\}$ samples regardless of the algorithm in use \citep{weisz2021exponential}.  

\item {\em Sample efficiency with a sub-optimality gap under a generative model.}   
The aforementioned sample size barrier can be alleviated if, for each state, there exists a sub-optimality gap $\taugap$ between the value under the optimal action and that under any sub-optimal action. 
As asserted by \citet[Appendix C]{du2019good}, a sample size that scales polynomially in $d$, $H$ and $1/\taugap$ is sufficient to identify the optimal policy, assuming access to the generative model. 

\item {\em Sample inefficiency with a large sub-optimality gap in online RL.} Turning to the standard episodic online RL setting (so that in each episode, the learner is given an initial state and executes the MDP for $H$ steps to obtain a sample trajectory),  sample-efficient algorithms are, much to our surprise, infeasible even in the presence of a large sub-optimality gap. As has been formalized in \citet{wang2021exponential}, it is possible to construct a hard MDP instance with a constant sub-optimality gap that cannot be solved without at least $\min \big\{ \exp(\Omega(d)), \exp(\Omega(H)) \big\}$ samples. 

\end{itemize} 

In conclusion, the linear $Q^{\star}$ assumption, while succinctly capturing the low-dimensional structure, still presents an undesirable hurdle for RL. 
The sampling mechanism commonly studied in standard RL formulations precludes sample-efficient solutions even when a favorable sub-optimality gap comes into existence.

\subsection{Our contributions}

Having observed the exponential separation between the generative model and standard online RL when it comes to the linear $Q^{\star}$ problem,  
one might naturally wonder whether there exist practically relevant sampling mechanisms --- more flexible than standard online RL yet more practical than the generative model --- that promise substantial sample size reduction. This motivates the investigation of the current paper, as summarized below. 
\begin{itemize}
	\item {\em A new sampling protocol: sampling with state revisiting.} We investigate a flexible sampling protocol that is built upon classical online/exploratory formalism but allows for {\em state revisiting} (detailed in Algorithm~\ref{alg:sampling-mechanism}). In each episode, the learner starts by running the MDP for $H$ steps, and is then allowed to revisit any previously visited  state and re-run the MDP from there. The learner is allowed to revisit states for an arbitrary number of times, although executing this feature too often might inevitably incur an overly large sampling burden. This sampling protocol accommodates several realistic scenarios; for instance, it captures the ``save files'' feature in video games that allows players to record player progress and resume from the save points later on. 
		In addition, state revisiting is reminiscent of Monte Carlo Tree Search implemented in various real-world applications, which assumes that the learner can go back to father nodes (i.e., previous states) \citep{silver2016mastering}. This protocol is also referred to as {\em local access to the simulator}
in the recent work \citet{yin2021efficient}.

	\item {\em A sample-efficient algorithm.} Focusing on the above sampling protocol, we propose a value-based method --- called LinQ-LSVI-UCB --- adapted from the LSVI-UCB algorithm \citep{jin2020provably}. The algorithm implements the optimism principle in the face of uncertainty, while harnessing the knowledge of the sub-optimality gap to determine whether to backtrack and revisit states. The proposed algorithm provably achieves a sample complexity that scales polynomially in the feature dimension $d$, the horizon $H$, and the inverse sub-optimality gap $1/\taugap$, but is otherwise independent of the size of the state space and the action space. 
\end{itemize}

\section{Model and assumptions}
\label{sec:setup}

In this section, we present precise problem formulation, notation, as well as a couple of key assumptions. 
Here and throughout, we denote by $|\cS|$ the cardinality of a set $\cS$, and adopt the notation $[n]\coloneqq \{1,\cdots,n\}$.

\subsection{Basics of Markov decision processes}

\paragraph{Finite-horizon MDP.} 

The focus of this paper is the setting of a finite-horizon MDP, 
as represented by the quintuple $\mathcal{M} = (\mathcal{S}, \mathcal{A}, \{P_h\}_{h=1}^H, \{r_h\}_{h=1}^H, H)$ \citep{agarwal2019reinforcement}. 
Here, $\cS\coloneqq \{1,\cdots, |\cS|\}$ represents the state space, $\cA\coloneqq \{1,\cdots, |\cA|\}$ denotes the action space, $H$ indicates the time horizon of the MDP, 
$P_h $ stands for the probability transition kernel at time step $h\in[H]$ 
(namely, $P_h(\cdot \mymid s,a)$ is the transition probability from state $s$ upon execution of action $a$ at step $h$),
whereas $r_h: \cS \times \cA \rightarrow [0,1]$ represents the reward function at step $h$ 
(namely, we denote by $r_h(s,a)$ the immediate reward received at step $h$ when the current state is $s$ and the current action is $a$).  
For simplicity, it is assumed throughout that all rewards $\{r_h(s,a)\}$ are deterministic and reside within the range $[0,1]$. 
Note that our analysis can also be straightforwardly extended to accommodate random rewards, which we omit here for the sake of brevity.


\paragraph{Policy, value function, and Q-function.}

We let $\pi=\{\pi_h\}_{1\leq h\leq H}$ represent a policy or action selection rule. 
For each time step $h$,  $\pi_h$ represents a deterministic mapping from $\cS$ to $\cA$, namely, action $\pi_h(s)$ is taken at step $h$ if the current state is $s$. The value function associated with policy $\pi$ at step $h$ is then defined as the cumulative reward received between steps $h$ and $H$ under this policy, namely, 
\begin{align}
	\forall (s,h)\in \cS \times [H]:\qquad
	V^{\pi}_h (s) \defn \mathbb{E} \left[ \sum_{t=h}^{H} r_t(s_t,a_t ) \,\Big|\, s_t =s \right]. 
	 \label{eq:defn-value-function-h}
\end{align} 
Here, the expectation is taken over the randomness of an MDP trajectory $\{s_t\}_{t=h}^H$ induced by policy $\pi$ (namely, $a_t = \pi_t(s_t)$ and $s_{t+1}\sim P(\cdot\mymid s_t, a_t)$ for any $h\leq t\leq H$). 
%
Similarly, the action-value function (or Q-function) associated with policy $\pi$ is defined as  
 \begin{equation}
	\forall (s, a, h)\in \cS \times \cA \times [H]:\qquad 
	 Q^{\pi}_h(s,a) \defn \mathbb{E} \left[ \sum_{t=h}^{H} r_t(s_t,a_t ) \,\Big|\, s_h =s, a_h = a \right],  
	 \label{eq:defn-action-value-function-h}
\end{equation} 
which resembles the definition~\eqref{eq:defn-value-function-h} except that the action at step $h$ is frozen to be $a$. 
Our normalized reward assumption (i.e., $r_h(s,a)\in [0,1]$) immediately leads to the trivial bounds
\begin{equation}
	\forall (s, a, h)\in \cS \times \cA \times [H]:\qquad
	0\leq V^{\pi}_h (s) \leq H
	\qquad \text{and}\qquad 
	0 \leq Q^{\pi}_h (s, a) \leq H .
\end{equation}

A recurring goal in reinforcement learning is to search for a policy that maximizes the value function and the Q-function. 
For notational simplicity, we define 
the optimal value function $V^{\star}= \{V_h^{\star}\}_{1\leq h\leq H}$ and optimal Q-function $Q^{\star}=\{Q_h^{\star}\}_{1\leq h\leq H}$ respectively as follows
\begin{equation*}
	\forall (s, a, h)\in \cS \times \cA \times [H]: \qquad
	V^{\star}_h(s) \defn \max_{\pi} V^{\pi}_h(s)
	\qquad \text{and} \qquad
	Q^{\star}_h(s,a) \defn \max_{\pi} Q^{\pi}_h(s,a), 
\end{equation*}
with the optimal policy (i.e., the one that maximizes the value function) represented by $\pi^{\star}=\{\pi^{\star}_h\}_{1\leq h\leq H}$.

\subsection{Key assumptions}

\paragraph{Linear realizability of $Q^{\star}$.}

In order to enable significant reduction of sample complexity, it is crucial to exploit proper low-dimensional structure of the problem. 
This paper is built upon linear realizability of the optimal Q-function $Q^{\star}$ as follows. 
\begin{assumption}
	\label{assumption:linear-Qstar}
	Suppose that there exist a collection of pre-determined feature maps 
\begin{align}
	\varphi = (\varphi_h)_{1\leq h\leq H}, \qquad \varphi_h : \cS\times\cA \rightarrow \mathbb{R}^d
	\label{eq:defn-feature-maps}
\end{align}
and a set of unknown vectors $ \theta_h^{\star} \in \mathbb{R}^d $ ($1\leq h\leq H$) such that  
\begin{align}
	\forall (s, a,h) \in \cS \times \cA \times [H]:
	\qquad
	Q_h^{\star}(s, a) = \langle \varphi_h(s, a), \, \theta_h^{\star} \rangle.
\end{align}

In addition, we assume that
\begin{align}
	\label{eq:assumptions-phi-theta-size}
	\forall (s, a,h) \in \cS \times \cA \times [H]: \qquad
	\|\varphi_h(s, a)\|_2 \le 1
	\quad \text{and} \quad
	\|\theta_h^{\star}\|_2 \le 2H\sqrt{d}. 
\end{align}
\end{assumption}
In other words, we assume that $Q^{\star}=\{Q_h^{\star}\}_{1\leq h\leq H}$ can be embedded into a $d$-dimensional subspace encoded by $\varphi$, 
with $d\leq |\cS||\cA|$. 
In fact, we shall often view $d$ as being substantially smaller than the ambient dimension $|\cS||\cA|$ 
in order to capture the dramatic degree of potential dimension reduction. 
It is noteworthy that linear realizability of $Q^{\star}$ in itself is a considerably weaker assumption compared to 
the one commonly assumed for linear MDPs \citep{jin2020provably} (which assumes $\{P_h\}_{1\leq h\leq H}$ and  $\{r_h\}_{1\leq h\leq H}$
are all linearly parameterized). The latter necessarily implies the former, while in contrast the former by no means implies the latter. 
Additionally, we remark that the assumption \eqref{eq:assumptions-phi-theta-size}
is compatible with what is commonly assumed for linear MDPs;  see, e.g., \citet[Lemma~B.1]{jin2020provably} for a reasoning about why this bound makes sense.

\paragraph{Sub-optimality gap.}

As alluded to previously, another metric that comes into play in our theoretical development is the sub-optimality gap. 
Specifically, for each state $s$ and each time step $h$, we define the following metric
\begin{equation}
	\Delta_h(s) \defn \min_{a \notin \cA_s^{\star}} \big\{ V^{\star}_h(s) - Q^{\star}_h(s, a) \big\} 
	\qquad
	\text{with }\cA_s^{\star} \defn \big\{a: Q^{\star}_h(s, a) = V^{\star}_h(s)\big\}.
	\label{eq:defn-gap-h}
\end{equation}
In words, $\Delta_h(s)$ quantifies the gap --- in terms of the resulting Q-values --- between the optimal action and the sub-optimal ones. 
It is worth noting that there might exist multiple optimal actions for a given $(s,h)$ pair, namely,  
the set $\cA_s^{\star}$ is not necessarily a singleton.  
Further, we define the minimum gap over all $(s,h)$ pairs as follows 
\begin{equation}
	\Delta_{\mathsf{gap}} \defn \min_{s, h \,\in\, \cS \times [H]} \Delta_h(s) ,
	\label{eq:defn-gap}
\end{equation}
and refer to it as the sub-optimality gap throughout this paper.

\subsection{RL under sampling with state revisiting}

In standard online episodic RL settings, the learner collects data samples by executing multiple length-$H$ trajectories in the MDP $\mathcal{M}$ via suitably chosen policies; more concretely, in the $n$-th episode with a given initial state $s_0^n$, the agent executes a policy to generate a sample trajectory 
$\{(s_h^n, a_h^n)\}_{1\leq h\leq H}$, where $(s_h^n, a_h^n)$ denotes the state-action pair at time step $h$. 
This setting underscores the importance of trading off exploitation and exploration. 
As pointed out previously, however, this classical sampling mechanism could be highly inefficient for MDPs with linearly realizable $Q^{\star}$, even in the face of a constant sub-optimality gap \citep{wang2021exponential}.   

\paragraph{A new sampling protocol with state revisiting.} 
In order to circumvent this sample complexity barrier, the current paper studies a more flexible sampling mechanism that allows one to revisit previous states in the same episode.  Concretely, in each episode, the sampling process can be carried out in the following fashion: 
\medskip

\begin{algorithm}[H]
\DontPrintSemicolon
	 {\bf Input:} initial state $s_1$. \\
	 Select a policy and sample a length-$H$ trajectory $\{(s_t, a_t)\}_{1\leq t\leq H}$. \\

	\Repeat{the learner terminates it.}{
		Pick any previously visited state $s_h$ in this episode; \\
		Execute a new trajectory starting from $s_h$ all the way up to step $H$, namely, $\{(s_t, a_t)\}_{h \leq t\leq H}$; here, we overload notation to simplify presentation.
	}

    \caption{Sampling protocol for an episode with state revisiting.}
 \label{alg:sampling-mechanism}
\end{algorithm}
\medskip

As a distinguishing feature, the sampling mechanism described in Algorithm~\ref{alg:sampling-mechanism} allows one to revisit previous states and retake samples from there, which reveals more information regarding these states. 
To make apparent its practice relevance, we first note that the generative model proposed in \citet{kearns1999finite,kakade2003sample} --- in which one can query a simulator with  arbitrary state-action pairs to get samples --- is trivially subsumed as a special case of  this sampling mechanism. 
Moving on to a more complicated yet realistic scenario, consider role-playing video games which commonly include built-in ``save files'' features. 
This type of features allows the player to record its progress at any given point, so that it can resume the game from this save point later on.  
In fact, rebooting the game multiple times from a saved point allows an RL algorithm to conduct trial-and-error learning for this particular game point. 

As a worthy note, while revisiting a particular state many times certainly yields information gain about this state, 
it also means that fewer samples can be allocated to other episodes if the total sampling budget is fixed. 
Consequently, how to design intelligent state revisiting schemes in order to optimize sample efficiency requires careful thinking.

\paragraph{Learning protocol and sample efficiency.} We are now ready to describe the learning process --- which consists of $N$ episodes --- and our goal. 
\begin{itemize}
	\item In the $n$-th episode ($1\leq n\leq N$), the learner is given an initial state $s_1^{(n)}$ (assigned by nature), and executes the sampling protocol in Algorithm~\ref{alg:sampling-mechanism} until this episode is terminated. 
	\item At the end of the $n$-th episode, the outcome of the learning process takes the form of a policy $\pi^{(n)}$, which is learned based on all information collected up to the end of this episode.  
\end{itemize}
The quality of the learning outcome $\{\pi^{(n)}\}_{1\leq n\leq N}$ is then measured by the cumulative regret over $N$ episodes as follows:
\begin{align}
	\mathsf{Regret} (N) \coloneqq  
	\sum_{n=1}^N \left( V_1^{\star} \big(s_1^{(n)}\big) - V_1^{\pi^{(n)}} \big(s_1^{(n)}\big) \right) ,
	\label{eq:defn-regret}
\end{align}
which is what we aim to minimize under a given sampling budget. 
More specifically, for any target level $\varepsilon \in [0,H]$, the aim is to achieve
\[
	\frac{1}{N}\mathsf{Regret} (N) \le \varepsilon
\]
regardless of the initial states (which are chosen by nature), 
using a sample size $T$ no larger than $\mathsf{poly}\big( d, H, \frac{1}{\varepsilon}, \frac{1}{\taugap} \big)$ (but independent of $|\cS|$ and $|\cA|$).  Here and throughout, 
$T$ stands for the total number of samples observed in the learning process; for instance, a new trajectory $\{(s_t, a_t)\}_{h\leq t\leq H}$ amounts to $H-h$ new samples. 
Due to the presence of state revisiting, there is a difference between our notions of regret / sample complexity and the ones used in standard online RL, which we shall elaborate on in the next section. 
An RL algorithm capable of achieving this level of sample complexity is declared to be sample-efficient, given that the sample complexity does not scale with the ambient dimension of the problem (which could be enormous in contemporary RL).

\begin{remark}[From average regret to PAC guarantees and optimal policies.]

There is some intimate connection between regret bounds and PAC guarantees that has been pointed out previously (e.g., \cite{jin2018q}). 
For instance, by fixing the initial state distribution to be identical (e.g., $s_1^{(n)}=s$ for all $1\leq n\leq N$) and choosing the output policy $\widehat{\pi}$ uniformly at random from $\{\pi^{(n)}\mid 1\leq n\leq N\}$, one can easily verify that this output policy $\widehat{\pi}$ is $\varepsilon$-optimal for state $s$,
as long as  $\frac{1}{N}\mathsf{Regret}(N) \leq \varepsilon$.
\end{remark}

\section{Algorithm and main results}
\label{sec:main-results}

In this section, we put forward an algorithm tailored to the sampling protocol described in Algorithm~\ref{alg:sampling-mechanism}, 
and demonstrate its desired sample efficiency.

%
%
%

%
%
%
%
%
%
%

\subsection{Algorithm} 
\label{sec:algorithm}

Our algorithm design is motivated by the method proposed in \citep{jin2020provably} for linear MDPs --- called {\em least-squares value iteration with upper confidence bounds (LSVI-UCB)} --- which follows the principle of ``optimism in the face of uncertainty''. In what follows, we shall begin by briefly reviewing the key update rules of LSVI-UCB, and then discuss how to adapt it to accommodate MDPs with linearly realizable $Q^{\star}$ when state revisiting is permitted.

\paragraph{Review: LSVI-UCB for linear MDPs.} Let us remind the readers of the setting of linear MDPs. 
It is assumed that there exist  unknown vectors ${\mu}_h(\cdot)= [ \mu_h^{(1)}, \cdots, \mu_h^{(d)} ]^{\top} \in \mathbb{R}^{d\times |\cS|} $ and $w_h \in \mathbb{R}^d$ such that
\begin{align*}
	\forall(s,a,h) \in \cS\times \cA \times [H]:
	\qquad
	P_h(\cdot \mymid s,a) = \big\langle \varphi(s,a), {\mu}_h(\cdot) \big\rangle
	\quad \text{and} \quad
	r_h(s,a) =  \big\langle \varphi(s,a), w_h(s,a) \big\rangle .
\end{align*}
In other words, both the probability transition kernel and the reward function can be linearly represented using the set of feature maps $\{ \varphi(s,a) \}$. 

LSVI-UCB can be viewed as a generalization of the UCBVI algorithm \citep{azar2017minimax} (originally proposed for the tabular setting) to accommodate linear function approximation. 
In each episode, the learner draws a sample trajectory following the greedy policy w.r.t. the current Q-function estimate with UCB exploration; namely, an MDP trajectory $\{(s_h^n, a_h^n)\}_{1\leq h\leq H}$ is observed in the $n$-th episode.  
Working backwards (namely, going from step $H$ all the way back to step $1$), the LSVI-UCB algorithm in the $n$-th episode consists of the following key updates: 
\begin{subequations}
\label{eq:update-Jin}
\begin{align}
\Lambda_{h} & ~~\gets\sum_{i=1}^{n}\varphi(s_{h}^{i},a_{h}^{i}) \varphi(s_{h}^{i},a_{h}^{i})^{\top}+ \lambda I,\label{eq:update-Lambda-h-Jin}\\
\theta_{h} & ~~\gets\Lambda_{h}^{-1}\sum_{i=1}^{n}\varphi(s_{h}^{i},a_{h}^{i})\left\{ r_{h}(s_{h}^{i},a_{h}^{i})+\max_{a}Q_{h+1}(s_{h+1}^{i},a)\right\} ,\label{eq:update-theta-h-Jin}\\
Q_{h}(\cdot,\cdot) & ~~\gets\min\left\{ \big\langle\theta_{h},\varphi(\cdot,\cdot)\big\rangle  + \beta\sqrt{\varphi(\cdot,\cdot)^{\top}\Lambda_{h}^{-1}\varphi(\cdot,\cdot)}, H\right\}, 
\label{eq:update-Q-h-Jin}
\end{align}
\end{subequations}
with the regularization parameter $\lambda$ set to be $1.$ 
Informally speaking,  $\theta_{h}$ (cf.~\eqref{eq:update-theta-h-Jin}) corresponds to the solution to a ridge-regularized least-squares problem --- tailored to solving the Bellman optimality equation with linear parameterization --- using all samples collected so far for step $h$, 
whereas the matrix $\Lambda_{h}$ (cf.~\eqref{eq:update-Lambda-h-Jin}) captures the (properly regularized) covariance of $\varphi(\cdot,\cdot)$ associated with these samples. 
In particular,  $\big\langle\theta_{h},\varphi(\cdot,\cdot)\big\rangle$ attempts to estimate the Q-function by exploiting its linear representation for this setting, and the algorithm augments it by an upper confidence bound (UCB) bonus $\beta\sqrt{\varphi(\cdot,\cdot)^{\top}\Lambda_{h}^{-1}\varphi(\cdot,\cdot)}$ --- a term commonly arising in the linear bandit literature \citep{lattimore2020bandit} --- to promote exploration, where $\beta$ is a hyper-parameter to control the level of exploration. 
As a minor remark, the update rule \eqref{eq:update-Q-h-Jin} also ensures that the Q-function estimate never exceeds the trivial upper bound $H$.

\paragraph{Our algorithm: LinQ-LSVI-UCB for linearly realizable $Q^{\star}$.} 

Moving from linear MDPs to MDPs with linear $Q^{\star}$, we need to make proper modification of the algorithm. 
To facilitate discussion, let us introduce some helpful concepts. 
\begin{itemize}
\item
	Whenever we start a new episode or revisit a state (and draw samples thereafter), we say that {\em a new path} is being collected. 
The total number of paths we have collected is denoted by $K$. 
\item For each $k$ and each step $h$, we define a set of indices 
\begin{align}
	\mathcal{I}_{h}^k \coloneqq \left\{ i:\; 1\leq i\leq k \mid  \theta_h^i \text{ is updated in the $i$-th path at time step }h \right\}, 
	\label{eq:defn-Ihk}
\end{align}
which will also be described precisely in Algorithm~\ref{alg:linearQ}. 
As we shall see, the cardinality of $\mathcal{I}_{h}^k$ is equal to the total number of new samples that have been collected at time step $h$ up to the $k$-th path. 
\end{itemize}

We are now ready to describe the proposed algorithm. For the $k$-th path, our algorithm proceeds as follows.
\begin{itemize}
	\item {\em Sampling.} Suppose that we start from a state $s_h^k$ at time step $h$.  
		The learner adopts the greedy policy $\pi^k=\{\pi^k_j\}_{h\leq j\leq H}$ in accordance with the current Q-estimate $\{ Q_j^{k-1} \}_{h\leq j\leq H}$, and observes a fresh sample trajectory $\{(s_j^k,a_j^k)\}_{h\leq j\leq H}$ as follows: for $j = h, h+1, \ldots, H$,
\begin{align}
	s_{j+1}^k \sim P_j(\cdot \mymid s_j^k,a_j^k)\qquad\text{with}\qquad a_j^k = \pi^k_j(s_j^k) \coloneqq \arg\max_a Q_j^{k-1}(s_j^k, a).
\end{align}

	\item {\em Backtrack and update estimates.} We then work backwards to update our Q-estimates and the $\theta$-estimates (i.e., estimates for the linear representation of $Q^{\star}$), until the UCB bonus term (which reflects the estimated uncertainty level of the Q-estimate) drops below a threshold determined by the sub-optimality gap $\taugap$. More precisely, working backwards from $h=H$, we carry out the following calculations if certain conditions (to be described shortly) are met: 
\begin{subequations}
\label{eq:updates-new-algorithm}
\begin{align}
\Lambda_h^{k} ~& \gets \sum_{i \in \mathcal{I}_{h}^k} \varphi_h(s_h^i, a_h^i)\varphi_h(s_h^i, a_h^i)^{\top} +  I, \label{eqn:def-Lambda-hk}\\
	\theta_h^{k} ~& \gets \big(\Lambda_h^{k}\big)^{-1} \sum_{i \in \mathcal{I}_{h}^k} \varphi_h(s_h^i, a_h^i)
	\Big\{ r_h(s_h^i, a_h^i) + \big\langle \varphi_{h+1}(s_{h+1}^i, a_{h+1}^i), \, \theta_{h+1}^{k} \big\rangle \Big\}, \label{eqn:update-theta}\\
	b_h^{k}(\cdot, \cdot) ~& \gets \beta \sqrt{\varphi_h(\cdot, \cdot)^{\top}\big(\Lambda_h^{k}\big)^{-1}\varphi_h(\cdot, \cdot)} , 
	\label{eqn:def-b-hk} \\
	Q_h^{k}(\cdot, \cdot) ~&\gets  \min\Big\{ \big\langle \varphi_h(\cdot, \cdot), \theta_h^{k} \big\rangle + b_h^{k}(\cdot, \cdot), \, H \Big\}.  
	\label{eqn:update-Q}
\end{align}
\end{subequations}
Here, we employ the pre-factor 
\begin{align}
	\beta = c_{\beta} \sqrt{dH^4\log \frac{KH}{\delta}}
	\label{eq:defn-beta}
\end{align}
to adjust the level of ``optimism'',
where $c_{\beta}>0$ is taken to be some suitably large constant. 
Crucially, whether the update \eqref{eqn:update-theta} --- and hence \eqref{eqn:update-Q} --- is executed depends on the size of the bonus term $b_{h+1}^{k-1}$ of the last attempt at step $h+1$.  Informally, if the bonus term is sufficiently small compared to the sub-optimality gap, 
		then we have confidence that the policy estimate (after time step $h$) can be trusted in the sense that it is guaranteed to generalize and perform well on unseen states.   

\end{itemize}
 The complete algorithm is summarized in Algorithm~\ref{alg:linearQ}, with some basic auxiliary functions provided in Algorithm~\ref{alg:linearQ-auxiliary}. To facilitate understanding, an illustration is provided in Figure~\ref{fig:illustration}.

Two immediate remarks are in order. In comparison to LSVI-UCB in \citet{jin2020provably}, 
the update rule \eqref{eqn:update-theta} for $\theta_h^k$ employs the linear representation $\big\langle \varphi_{h+1}(s_{h+1}^i, a_{h+1}^i), \, \theta_{h+1}^{k} \big\rangle $ {\em without the UCB bonus} as the Q-value estimate. This subtle difference turns out to be important in the analysis for MDPs with linear $Q^{\star}$. In addition, LSVI-UCB is equivalent to first obtaining a  linear representation of transition kernel $P$ \citep{agarwal2019reinforcement} and then using it to build Q-function estimates and draw samples. In contrast, Algorithm~\ref{alg:linearQ} cannot be interpreted as a decoupling of model estimation and planning/exploration stage, and is intrinsically a value-based approach.



\begin{algorithm}[t]
\DontPrintSemicolon
	\textbf{inputs:} number of episodes $N$, sub-optimality gap $\taugap$, initial states $\{s_1^{(n)}\}_{1\leq n \leq N}$. \\
  \textbf{initialization:} $h=0$, $n=0$, $\theta_j^0=0$ and $\mathcal{I}_{j}^0 = \emptyset$ for all $1\le j \le H$. \\

   \For{$k=1,2,\cdots$}
	{
		
%
	

		call $\pi^{k}\gets$ {\tt get-policy()}. \\
		$h \gets h + 1$.

		\If{$h = 1$} 
		{
			$\pi^{(n)} \gets \pi^{k-1}.$ {\color{blue}\tcp{record the up-to-date policy learned in this episode.}} 
			$n\gets n+1$. {\color{blue}\tcp{start a new episode.}} 
			\If{$n > N$}
			{ $K \gets k-1$ and \Return. {\color{blue}\tcp{terminate after $N$ episodes.}}}
			Set the initial state $s_1^k = s_1^{(n)}$. \\			
		}

		call {\tt sampling()}.  {\color{blue}\tcp{collect new samples from step $h$; see Algorithm~\ref{alg:linearQ-auxiliary}.}}

		{\color{blue}\tcc{backtrack and determine whether to revisit a state and redraw new samples.}} 
		Set $\theta_{H+1}^k = 0$ and $h = H$.  \\
		\While{$h>0~\text{and}~b_{h+1}^{k-1}(s_{h+1}^k, a_{h+1}^k) < \taugap /2 $ (cf.~\eqref{eqn:def-b-hk}) \label{line:select-b}}  
		{
			$\mathcal{I}_{h}^k \gets \mathcal{I}_{h}^{k-1} \bigcup \left\{k\right\}$. {\color{blue}\tcp{expand $\mathcal{I}_{h}^k$ whenever we need to update $\theta_h^k$.}}
			Update $\theta_h^k$ according to \eqref{eqn:update-theta}. \\
			$h \gets h -1$. \\
		}
		call {\tt update-remaining()}.  {\color{blue}\tcp{keep remaining iterates unchanged; see Algorithm~\ref{alg:linearQ-auxiliary}.}} 

	}


	\caption{LinQ-LSVI-UCB with state revisiting. }
 \label{alg:linearQ}
\end{algorithm}

\begin{algorithm}[ht]
\DontPrintSemicolon

\SetKwFunction{FMain}{sampling}
  \SetKwProg{Fn}{Function}{:}{}
  \Fn{\FMain{}}{
	  {\color{blue}\tcc{sampling from the beginning  (if $h=0$) or from a revisited state (if $h>0$). }}
		
		{\color{blue}\tcc{do not update samples prior to step $h$. }}
		\For{$j=1, 2, \ldots, h-1$}
		{
			Set $a_j^k = a_j^{k-1}$, and $s_{j+1}^k = s_{j+1}^{k-1}$. 
		}

		{\color{blue}\tcc{a new round of sampling from step $h$.}}
		\For{$j=h, h+1, \cdots,H$}
		{
			Compute $Q_j^{k-1}(s_j^k, a)$ according to \eqref{eqn:update-Q}. \\
			Take $a_j^k = \arg\max_a Q_j^{k-1}(s_j^k, a)$, and draw $s_{j+1}^k \sim P_j(\cdot\mymid s_j^k, a_j^k)$. 
		}
	
  }

\SetKwFunction{FMain}{update-remaining}
\SetKwProg{Fn}{Function}{:}{}
  \Fn{\FMain{}}{
		{\color{blue}\tcc{keep the estimates prior to step $h$ unchanged. }}
		\For{$j=1, 2, \ldots, h$}
		{
			Set $\theta_j^k = \theta_j^{k-1}$. 
		}
  }

\SetKwFunction{FMain}{get-policy}
\SetKwProg{Fn}{Function}{:}{}
  \Fn{\FMain{}}{
		{\color{blue}\tcc{update the Q-estimates. }}
		\For{$1\leq h\leq H$}
		{
			Set $Q_h^{k-1}(\cdot,\cdot)$ according to \eqref{eqn:update-Q}. \\
			$\pi_h^k (\cdot) \gets \arg\max_a Q_h^{k-1}(\cdot,a)$. 
			
		}
  }

	\caption{Simple auxiliary functions.}
 \label{alg:linearQ-auxiliary}
\end{algorithm}

\begin{figure}[ht]
	\centering
	\includegraphics[width=0.8\textwidth]{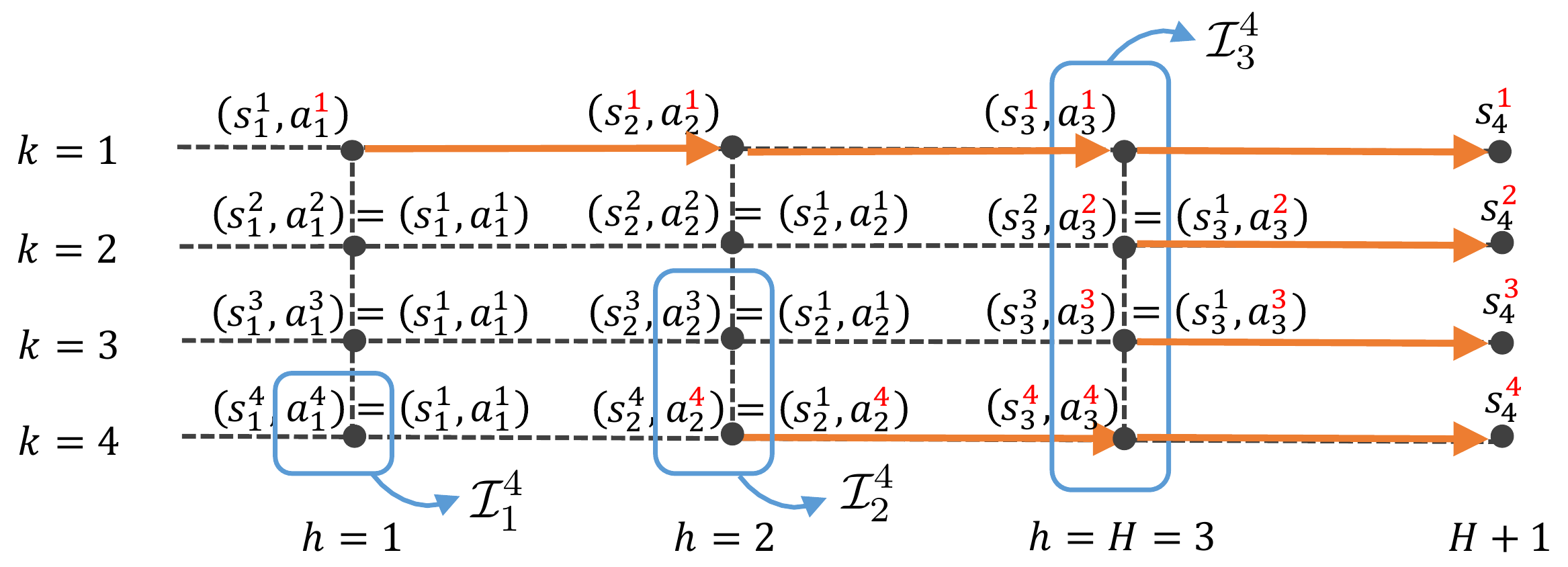}
	\caption{Illustration of the set $\mathcal{I}_{h}^K$ for a simple scenario where $N=1$, $H= 3$, and the total number of paths is $K=4$. 
	After the 1st path, the sampling process revisits state $s_3^1$ twice (each time drawing one new sample), and then revisits state $s_2^1$ to draw two samples from there. This episode then terminates as the conditions are met for all steps. A red superscript indicates new state or new actions taken, and the orange line illustrates the sampling process.  
	Here, we have $\mathcal{I}_{3}^4 = \{1,2,3,4\}$, $\mathcal{I}_{2}^4 = \{3,4\}$ and $\mathcal{I}_{1}^4 = \{4\}$, which record the paths where the linear representations are updated for each respective step. 
	}\label{fig:illustration}
\end{figure}

\subsection{Theoretical guarantees} 
\label{sec:algorithm}


Equipped with the precise description of Algorithm~\ref{alg:linearQ}, 
we are now positioned to present its regret bound and sample complexity analysis.  
Our result is this:

\begin{theorem} \label{thm:main}
	Suppose that Assumption~\ref{assumption:linear-Qstar} holds, and that $c_{\beta}\geq 8$ is some fixed constant. 
	Then for any $0<\delta < 1$, and any initial states $\big\{ s_1^{(n)} \big\}_{1\leq n\leq N}$, 
Algorithm~\ref{alg:linearQ} achieves a regret  (see \eqref{eq:defn-regret}) obeying
\begin{align}
\label{eqn:piazzolla}
	 \frac{1}{N} \mathsf{Regret} (N) \leq 8 c_{\beta}\sqrt{\frac{d^{2}H^{7}\log^{2}\frac{HT}{\delta}}{T}}
\end{align}
with probability at least $1-\delta$, 
provided that $N \geq \frac{4c_{\beta}^{2}d^{2}H^{5}\log^{2}\frac{HT}{\delta}}{\taugap^{2}}$. %
In addition, the total number of state revisits satisfies
\begin{align}
	K - N \le \frac{4c_{\beta}^2d^2H^5\log^2 \frac{KH}{\delta}}{\taugap^2}.
	\label{eq:UB-state-revisits}
\end{align}
\end{theorem}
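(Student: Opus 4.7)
My plan is to follow the standard optimism-in-the-face-of-uncertainty template for LSVI--UCB, but with two modifications dictated by the linear $Q^{\star}$ (rather than linear MDP) structure: (i)~I would prove a backwards-inductive two-sided concentration bound $|\langle\varphi_h(\cdot,\cdot),\theta_h^{k}\rangle - Q_h^{\star}(\cdot,\cdot)|\le b_h^{k}(\cdot,\cdot)$ that simultaneously yields optimism and pessimism on the indexed set $\mathcal{I}_h^k$, and (ii)~I would use the revisit criterion $b_{h+1}^{k-1}<\taugap/2$ to argue that, on the paths aggregated in $\mathcal{I}_h^k$, the realized action $a_{h+1}^{i}$ coincides with the optimal action at $s_{h+1}^{i}$, so that the Bellman-regression target is linear in $\varphi_h$. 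With this in hand, the per-episode regret is bounded in terms of accumulated UCB bonuses, and both the cumulative-regret and revisit bounds follow from an elliptic potential argument combined with the gap-based stopping rule.

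\textbf{Optimism under linear $Q^{\star}$.} The crux is the inductive step from $h{+}1$ to $h$. Assuming the two-sided bound at step $h{+}1$ and optimism of $Q_{h+1}^{i-1}$, for every index $i\in\mathcal{I}_h^k$ the algorithm only enters the $i$-th update after having verified $b_{h+1}^{i-1}(s_{h+1}^{i},a_{h+1}^{i})<\taugap/2$; combined with the definition of $\Delta_{\mathsf{gap}}$ this forces $a_{h+1}^{i}\in\cA_{s_{h+1}^{i}}^{\star}$, whence $\langle\varphi_{h+1}(s_{h+1}^{i},a_{h+1}^{i}),\theta_{h+1}^{k}\rangle$ approximates $V_{h+1}^{\star}(s_{h+1}^{i})$ up to $b_{h+1}^{k}(s_{h+1}^{i},a_{h+1}^{i})$. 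Consequently the regression targets in \eqref{eqn:update-theta} have conditional mean equal to the linear functional $\varphi_h(s_h^{i},a_h^{i})^{\top}\theta_h^{\star}=Q_h^{\star}(s_h^{i},a_h^{i})$ up to a controllable bias; a self-normalized concentration bound of Abbasi-Yadkori/Jin-type on $\sum_{i\in\mathcal{I}_h^k}\varphi_h(s_h^{i},a_h^{i})\eta_h^{i}$, applied with the covariance matrix $\Lambda_h^{k}$, then yields the two-sided error bound with the UCB radius $b_h^{k}$ from \eqref{eqn:def-b-hk}, provided the leading constant $c_{\beta}$ in \eqref{eq:defn-beta} is large enough. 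A union bound over $(h,k)\in[H]\times[K]$ produces a single high-probability event of probability at least $1-\delta$ on which the induction closes for all $(h,k)$.

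\textbf{Per-episode regret and cumulative bound.} On the good event, for any path $k$ started at step $h_k$ from $s_{h_k}^{k}$, optimism gives $V_{h_k}^{\star}(s_{h_k}^{k})-V_{h_k}^{\pi^{k}}(s_{h_k}^{k})\le 2\sum_{j=h_k}^{H}b_j^{k-1}(s_j^{k},a_j^{k})$ by the standard telescoping along the trajectory $\{(s_j^{k},a_j^{k})\}_{j\ge h_k}$, using that $\pi^{k}$ is greedy with respect to $Q^{k-1}$ and that the per-step Bellman error is dominated by the UCB bonus. Summing the $n=1,\dots,N$ contributions gives a bound of the form $\mathsf{Regret}(N)\le 2\sum_{k=1}^{K}\sum_{h=1}^{H}b_h^{k-1}(s_h^{k},a_h^{k})$. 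The elliptic-potential lemma (a.k.a.\ the determinant-trace inequality) applied to $\Lambda_h^{k}$ yields $\sum_{k=1}^{K}\sqrt{\varphi_h^{\top}(\Lambda_h^{k-1})^{-1}\varphi_h}\le\sqrt{2dK\log(1+K)}$, so altogether $\mathsf{Regret}(N)\lesssim\beta H\sqrt{dK\log K}$. Substituting $\beta\asymp\sqrt{dH^{4}\log(KH/\delta)}$ and using $T\le HK$ leads to the claimed bound $\mathsf{Regret}(N)/N\le 8c_{\beta}\sqrt{d^{2}H^{7}\log^{2}(HT/\delta)/T}$ after the regime $N\ge 4c_{\beta}^{2}d^{2}H^{5}\log^{2}(HT/\delta)/\taugap^{2}$ absorbs some lower-order terms.

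\textbf{Revisit count via the gap.} Every revisit is caused by the failure of the while-loop condition, i.e., by an event $b_{h+1}^{k-1}(s_{h+1}^{k},a_{h+1}^{k})\ge\taugap/2$ at some step. The number of such ``large-bonus'' events at step $h$ across the $K$ paths is controlled by a squared version of the elliptic potential: $|\{k:b_h^{k-1}\ge\taugap/2\}|\le\frac{4\beta^{2}}{\taugap^{2}}\sum_k\varphi_h^{\top}(\Lambda_h^{k-1})^{-1}\varphi_h\lesssim d\beta^{2}\log K/\taugap^{2}$. Summing over $h$ and plugging the choice of $\beta$ delivers $K-N\le 4c_{\beta}^{2}d^{2}H^{5}\log^{2}(KH/\delta)/\taugap^{2}$, which in turn implies $K\le 2N$ in the operating regime, closing the circular dependence between $K$ and $T$ in the log factors. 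The \emph{hardest} step I expect is the induction closing optimism under linear $Q^{\star}$ — in particular verifying that the gap-based revisit rule really does eliminate the action-mismatch bias in the regression target uniformly in $i\in\mathcal{I}_h^k$, which is precisely what fails under the standard online protocol and motivates the revisiting mechanism.
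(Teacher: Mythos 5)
Your proposal follows the paper's proof essentially step for step: a backwards-inductive two-sided bound $|\langle\varphi_h,\theta_h^k\rangle-Q_h^\star|\le b_h^k$ in which the revisit criterion $b_{h+1}^{i-1}<\taugap/2$ plus the gap forces $a_{h+1}^i$ to be optimal and thus kills the regression-target bias (the paper's Lemma~\ref{lem:Qk-error}, where this makes $\delta_j^k=0$), then a telescoping regret decomposition controlled by accumulated bonuses via the elliptic potential, a revisit count of order $\beta^2 d\log K/\taugap^2$ per step from the squared elliptic potential, and $N\le K\le 2N$ to convert to the stated $T$-dependence. Two small imprecisions to repair in a full write-up: the per-path inequality $V^\star_{h_k}-V^{\pi^k}_{h_k}\le 2\sum_j b_j^{k-1}$ only holds up to a martingale fluctuation term $(P_{h,s_h^k,a_h^k}-P_h^k)(V^\star_{h+1}-V^{\pi^k}_{h+1})$, which the paper retains and controls with Azuma--Hoeffding; and the bonus sum at step $h$ must be restricted to $k\in\mathcal{I}_h^K$ (as the paper does, using $\mathcal{I}_h^K\subseteq\mathcal{I}_{h+1}^K$) because the elliptic potential only telescopes on paths where $\Lambda_h$ is actually updated --- summing over all $k\in[K]$ leaves an extra contribution that must be bounded separately by the revisit count.
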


Theorem~\ref{thm:main} characterizes the sample efficiency of the proposed algorithm. Specifically, the theorem implies that for any $0<\varepsilon < H$, the average regret satisfies
$$\frac{1}{N}\mathsf{Regret}(N) \leq \varepsilon$$ 
with probability exceeding $1-\delta$, once the total number $T$ of samples exceeds 
\begin{align}
	T\geq \frac{64c_{\beta}^2 d^2 H^7 \log^2 \frac{HT}{\delta}}{ \varepsilon^2} .
	\label{eq:LB-sample-complexity}
\end{align}
%
Several implications and further discussions of this result are in order.

\paragraph{Efficiency of the proposed algorithm.} We first highlight several benefits of the proposed algorithm.  
\begin{itemize}

	\item {\em Sample efficiency.}  While our sample complexity bound \eqref{eq:LB-sample-complexity} scales as a polynomial function of both $d$ and $H$, it does not rely on either the state space size $|\cS|$ or the action space size $|\cA|$. This hints at the dramatic sample size reduction when $|\cS||\cA|$ far exceeds the feature dimension $d$ and the horizon $H$. 

	\item {\em A small number of state revisits.}  Theorem~\ref{thm:main} develops an upper bound \eqref{eq:UB-state-revisits} on the total number of state revisits, which is gap-dependent but otherwise independent of the target accuracy level $\varepsilon$. As a consequence, as the sample size $T$ increases (or when $\varepsilon$ decreases), the ratio of the number of state revisits to the sample size becomes vanishingly small,  meaning that the true sampling process in our algorithm becomes increasingly closer to the standard online RL setting.

	\item {\em Computational complexity and memory complexity.} 
		The computational bottleneck of the proposed algorithm lies in the update of $\theta_h^k$ and $b_h^k$ (see \eqref{eqn:update-theta} and \eqref{eqn:def-b-hk}, respectively), which consists of solving a linear systems of equations and can be accomplished using, say, the conjugate gradient method in time on the order of $d^2$ (up to logarithmic factor). In addition, one needs to search over all actions when drawing samples, so the algorithm necessarily depends on $|\cA|$. In total, the algorithm has a runtime  no larger than $\widetilde{O}(d^2|\mathcal{A}|T)$. 
		In addition, implementing our algorithm requires $O(d^2H)$ units of memory. 

\end{itemize}

\paragraph{Cumulative regret over paths.}
Due to the introduction of state revisiting, there are two possible ways to accumulate regrets: over the episodes or over the paths. While our analysis so far adopts the former (see \eqref{eq:defn-regret}),  it is not difficult to translate our regret bound over the episodes to the one over the paths. To be more precise, let us denote the regret over paths as follows for distinguishing purposes: 
\begin{align}
	\mathsf{Regret}_{\mathsf{path}} (K) \coloneqq  \sum_{k=1}^K \left( V_1^{\star} \big(s_1^{k}\big) - V_1^{\pi^{k}} \big(s_1^{k}\big) \right) .
	\label{eq:standard-regret}
\end{align}
A close inspection of our analysis readily reveals the following regret upper bound
\begin{subequations} 
\label{eq:regret_path}	
\begin{align} 
	\mathsf{Regret}_{\mathsf{path}} (K)  &\le 4c_{\beta}\sqrt{d^2H^6K\log^2\frac{HT}{\delta}} + \frac{4c_{\beta}^2d^2H^6\log^2 \frac{KH}{\delta}}{\Delta_{\mathsf{gap}}^2}  \label{eq:worst_case}
\end{align}
with probability exceeding $1-\delta$; see Section~\ref{sec:analysis-proof-regret-path} for details. 
This bound confirms that the regret over the paths exhibits a scaling of at most $\sqrt{K}$.

\paragraph{Logarithmic regret.}

As it turns out, our analysis further leads to a significantly strengthened upper bound on the expected regret. 
As we shall solidify in Section~\ref{sec:analysis-proof-regret-path}, 
the regret incurred by our algorithm satisfies the following upper bound
\begin{align}
 	\mathbb{E} \big[ \mathsf{Regret} (N) \big] 
	\le \mathbb{E} \big[ \mathsf{Regret}_{\mathsf{path}} (K) \big] 
	\le \frac{17c_{\beta}^{2}d^{2}H^{7}\log^{2}(KH)}{\Delta_{\mathsf{gap}}^{2}},
	\label{eq:expected_regret}
\end{align}
\end{subequations}
largely owing to the presence of the gap assumption. 
This implies that the expected regret scales only logarithmically in the number of paths $K$, 
which could often be much smaller than the previous bound \eqref{eq:worst_case}. 
In fact, this is consistent with the recent literature regarding logarithmic regrets under suitable gap assumptions (e.g., \cite{simchowitz2019non,yang2021q}).

\paragraph{Comparison to the case with a generative model.} We find it helpful to compare our findings with the algorithm developed in the presence of a generative model.   
In a nutshell, the algorithm described in \citet[Appendix C]{du2019good} starts by identifying a ``well-behaved'' basis of the feature vectors, and then queries the generative model to sample the state-action pairs related to this basis. In contrast, our sampling protocol (cf.~Algorithm~\ref{alg:sampling-mechanism}) is substantially more restrictive and does not give us the freedom to sample such a basis. In fact, our algorithm is exploratory in nature, which is more challenging to analyze than the case with a generative model.  


We shall also take a moment to point out a key technical difference between our approach and the algorithm put forward in \citet{du2019good}. 
A key insight in \citet{du2019good} is that: by sampling each {\em anchor} state-action pair for $\mathsf{poly}(1/\taugap)$ times, one 
can guarantee sufficiently accurate Q-estimates in all state-action pairs, which in turn ensures $\pi_k = \pi^\star$  for all state-action pairs in all future estimates.   
This, however, is not guaranteed in our algorithm when it comes to the state revisiting setting. 
Fortunately, the gap condition helps ensure that there are at most $\mathsf{poly}(1/\taugap)$ number of samples such that $\pi_k \ne \pi^\star$, although the discrepancy might happen at any time throughout the execution of the algorithm (rather than only happening at the beginning). In addition, careful use of state revisiting helps avoid these sub-optimal estimates by resetting for at most $\mathsf{poly}(1/\taugap)$ times, which effectively prevents error blowup.
		

\paragraph{Comparison to prior works in the presence of state revisiting.} Upon closer examination, the sampling mechanism of \citet{weisz2021query} considers another kind of state revisiting strategy and turns out to be quite similar to ours, which accesses a batch of samples $\{(s_h, a_h, s_{h+1}^i)\}_{i \ge 1}$ for the current state $s_h$ with all actions $a_h \in \cA$. 
Assuming only $V^{\star}$ is linearly realizable, their sample complexity is on the order of $(dH)^{|\mathcal{A}|}$, 
and hence its sample efficiency depends highly on the condition that $\cA=O(1)$.  
Additionally, \citet{du2020agnostic} proposed an algorithm --- tailored to a setting with deterministic transitions --- that requires sampling each visited state multiple times (and hence can be accomplished when state revisiting is permitted); this algorithm might be extendable to accommodate stochastic transitions.



\section{Additional related works}
\label{sec:related-works}

Non-asymptotic sample complexity guarantees for RL algorithms have been studied extensively  in the tabular setting over recent years, e.g., \citet{azar2013minimax,jaksch2010near,azar2017minimax,osband2016generalization,even2003learning,dann2015sample,sidford2018variance,zhang2020almost,li2020breaking,agarwal2019optimality,yang2021q,li2020sample,li2021breaking,li2021tightening,wainwright2019stochastic,agarwal2020optimality,cen2020fast}, which have been, to a large extent, well-understood. The sample complexity typically scales at least linearly with respect to the state space size $|\cS|$ and the action space size $|\cA|$, and therefore, falls short of being sample-efficient when the state/action space is of astronomical size. 
In contrast, theoretical investigation of RL with function approximations is still in its infancy due to the complicated interaction between the dynamic of the MDP with the function class.  
In fact, theoretical support remains highly inadequate even when it comes to linear function approximation. 
For example, plain Q-learning algorithms coupled with linear approximation might easily diverge \citep{baird1995residual}. 
It is thus of paramount interest to investigate how to design algorithms that can efficiently exploit the low-dimensional structure without compromising learning accuracy. 
In what follows, we shall discuss some of the most relevant results to ours. The reader is also referred to the summaries of recent literature in \citet{du2019good,du2021bilinear}.

\paragraph{Linear MDP.} \citet{yang2019sample, jin2020provably} proposed the linear MDP model, which can be regarded as a generalization of the linear bandit model \citep{abbasi2011improved,dimakopoulou2019balanced} and has attracted enormous recent activity (see e.g., \citet{wang2019optimism,yang2020reinforcement,zanette2020frequentist,he2020logarithmic,du2019good,  wang2020reward, hao2020sparse, wang2021sample, wei2021learning, touati2020efficient} and the references therein). 
The results have further been generalized to scenarios with much larger feature dimension by exploiting proper kernel function approximation  
\citep{yang2020bridging,long20212}.

\paragraph{From completeness to realizability.} \citet{du2019good} considered the policy completeness assumption, which assumes that the Q-functions of {\em all policies} reside within a function class that contains all functions that are linearly representable in a known low-dimensional feature space. In particular, \cite{du2019good,lattimore2020learning} examined how the model misspecification error propagates and impacts the sample efficiency of policy learning. A related line of works assumed that the linear function class is closed or has low approximation error under the Bellman operator, referred to as low inherent Bellman error \citep{munos2005error,shariff2020efficient,zanette2019limiting,zanette2020learning}. 

These assumptions remain much stronger than the  {\em realizability} assumption considered herein, where only the optimal Q-function $Q^{\star}$ is assumed to be linearly representable. \cite{wen2017efficient,du2020agnostic} showed that sample-efficient RL is feasible in deterministic systems, which has been extended to stochastic systems with low variance in \cite{du2019provably} under additional gap assumptions. In addition, \citet{weisz2021exponential} established exponential sample complexity lower bounds under the generative model when only $Q^{\star}$ is linearly realizable; their construction critically relied on making the action set exponentially large. When restricted to a constant-size action space, \citet{weisz2021query} provided a sample-efficient algorithm when only $V^{\star}$ is linearly realizable, where their sampling protocol essentially matches ours. Recently, \citet{du2021bilinear} introduced the bilinear class and proposed sample-efficient algorithms when both $V^{\star}$ and $Q^{\star}$ are linearly realizable in the online setting. 



\paragraph{Beyond linear function approximation.} 
Moving beyond linear function approximation, another line of works \citep{ayoub2020model,zhou2020nearly} investigated mixtures of linear MDPs. Moreover, additional efforts have been dedicated to studying the low-rank MDP model (without knowing {\em a priori} the feature space), which aims to learn the low-dimensional features as part of the RL problem; partial examples include \cite{agarwal2020flambe,modi2021model}. We conclude by mentioning in passing other attempts in identifying tractable families of MDPs with structural assumptions, such as \cite{jin2021bellman,jiang2017contextual,wang2020reinforcement,osband2014model}.

\section{Analysis}
\label{sec:analysis}
%
Before proceeding, we introduce several convenient notation to be used throughout the proof.
As before, the total number of paths that have been sampled is denoted by $K$. 
For any $(s,a,h) \in \cS\times \cA \times [H]$, we abbreviate 
\begin{align}
	P_{h, s,a} \defn P_h(\cdot \mymid s,a) \in \mathbb{R}^{|\cS|}. 
	\label{eq:defn-P-hsa}
\end{align}
For any time step $h$ in the $k$-th path,
we define the empirical distribution vector $P_h^k \in \mathbb{R}^{|\cS|}$  such that
\begin{align}
	P_h^k(s) \defn \begin{cases} 1, &\text{if }s = s_{h+1}^k; \\ 0,  &\text{if }s \ne s_{h+1}^k. \end{cases} 
	\label{eq:defn-Ph-k-s}
\end{align}
The value function estimate $V_{h}^k: \cS \rightarrow \mathbb{R}$ at time step $h$ after observing the $k$-th path  is defined as
\begin{align}
 \label{defn:V-Q}
	\forall (s,h,k) \in \cS \times [H] \times [K]: \qquad V_h^{k}(s) \coloneqq  \max_{a\in \cA} Q_h^{k}(s,a) ,
\end{align}
where the iterate $Q_h^{k}$ is defined in~\eqref{eqn:update-Q}.

Further, we remind the reader the crucial notation $\mathcal{I}_h^{k}$ introduced in \eqref{eq:defn-Ihk}, 
which represents the set of paths between the 1st and the $k$-th paths that update the estimate of $\theta_h^{\star}$. 
We have the following basic facts. 
\begin{lemma}
\label{lem:basic-facts-Ik}
For all $1 \le k \le K$ and $1 \le h \le H$, one has
\begin{align} \label{eq:I-order}
\mathcal{I}_h^{k} \subseteq \mathcal{I}_{h+1}^{k}. 
\end{align}
In addition,
\begin{align} \label{eq:I-end}
\left|\mathcal{I}_1^{K}\right| = N\qquad\text{and}\qquad\left|\mathcal{I}_H^{K}\right| = K.
\end{align}
\end{lemma}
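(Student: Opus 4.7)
The plan is to read off both statements directly from the control flow of Algorithm~\ref{alg:linearQ}. For the inclusion $\mathcal{I}_h^k \subseteq \mathcal{I}_{h+1}^k$, I would induct on $k$; the base case $k=0$ is trivial since both sets are empty, and for the inductive step the key observation is that $k$ is added to $\mathcal{I}_h^k$ only inside the backtracking while-loop, which starts at $h = H$ and decrements $h$ only after an update. Hence whenever $k \in \mathcal{I}_h^k$, the loop must already have passed through step $h+1$ and added $k$ to $\mathcal{I}_{h+1}^k$; combined with the hypothesis $\mathcal{I}_h^{k-1} \subseteq \mathcal{I}_{h+1}^{k-1}$, the inclusion follows in either case (whether or not $k$ itself lies in $\mathcal{I}_h^k$).

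For $|\mathcal{I}_H^K| = K$, I would note that the first guard of the while-loop inspects $b_{H+1}^{k-1}(s_{H+1}^k,a_{H+1}^k)$, which equals $0$ under the natural convention that there is no step $H+1$ (consistent with $\theta_{H+1}^k = 0$ set at the start of the loop). The guard is therefore satisfied automatically for every $k$, so every index $k \in \{1, \ldots, K\}$ lands in $\mathcal{I}_H^k$, giving $\mathcal{I}_H^K = \{1,\ldots,K\}$.

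For $|\mathcal{I}_1^K| = N$, I would set up a bijection between elements of $\mathcal{I}_1^K$ and ``new-episode'' triggers that occur after iteration $1$. The backtracking of iteration $k$ terminates with $h = 0$ if and only if $k \in \mathcal{I}_1^K$ (since $h$ is decremented after each update and only reaches $0$ once step $1$ is updated); in that case the next iteration increments $h$ to $1$ and triggers the new-episode branch, and conversely any trigger at iteration $k+1 \ge 2$ forces $k \in \mathcal{I}_1^K$. Together with the automatic trigger at iteration $1$ from the initialization $h = 0$, the trigger set equals $\{1\} \sqcup (\mathcal{I}_1^K + 1)$ (disjoint since $0 \notin \mathcal{I}_1^K$). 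Because the algorithm terminates only inside this branch at the $(N+1)$-th trigger --- which must occur at iteration $K+1$, as iterations $1,\ldots,K$ are fully executed --- the total number of triggers equals $N+1$, so $1 + |\mathcal{I}_1^K| = N+1$ and hence $|\mathcal{I}_1^K| = N$. The main obstacle is precisely this last counting argument, which requires carefully matching end-of-iteration $h$-values to new-episode branch triggers and correctly identifying that the terminating iteration is $K+1$; the remaining pieces are immediate from the pseudocode.
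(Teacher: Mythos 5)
Your proposal is correct and follows the same route as the paper, which simply declares both facts ``self-evident from the construction'': the inclusion because the backtracking while-loop proceeds from $h=H$ downward (so updating $\theta_h^k$ forces $\theta_{h+1}^k,\dots,\theta_H^k$ to have been updated), and the cardinalities because every path updates step $H$ while exactly the paths whose backtracking reaches step $1$ trigger a new episode. Your write-up just makes explicit the control-flow bookkeeping that the paper leaves implicit.
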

\begin{proof}
This lemma is somewhat self-evident from our construction, and hence we only provide brief explanation. 
The first claim~\eqref{eq:I-order} holds true since if $\theta_h^i$ is updated in the $i$-th path, then $\theta_j^i$ ($j \ge h$) must also be updated.  
	The second claim~\eqref{eq:I-end} arises immediately from the definition of $N$ (i.e., the number of episodes) and $K$ (i.e., the number of paths).
\end{proof}

\subsection{Main steps for proving Theorem~\ref{thm:main}}

In order to bound the regret for our proposed estimate, we first make note of an elementary relation that follows immediately from our construction: 
\begin{align}
	\sum_{n =1}^N \Big( V_1^{\star}(s_1^{(n)}) - V_1^{\pi^{(n)}}(s_1^{(n)}) \Big)
	 = \sum_{k \in \mathcal{I}_{1}^K} \Big( V_1^{\star}(s_1^k) - V_1^{\pi^k}(s_1^k) \Big),
	 \label{eq:equiv-n-k-regret}
\end{align}
%
%
where we recall the definition of $\mathcal{I}_{1}^K$ in~\eqref{eq:defn-Ihk}.
It thus comes down to bounding the right-hand side of \eqref{eq:equiv-n-k-regret}.

\paragraph{Step 1: showing that $Q_h^k$ is an optimistic view of $Q_h^{\star}$.}

Before proceeding, let us first develop a sandwich bound pertaining to the estimate error of the estimate $Q_h^k$ delivered by Algorithm~\ref{alg:linearQ}.
The proof of this result is postponed to Section~\ref{sec:pf-lem-qk}. 
\begin{lemma} \label{lem:Qk-error}
Suppose that $c_{\beta}\geq 8$. With probability at least $1-\delta$, the following bound
\begin{equation}
0 \le Q_h^k(s, a) - Q_h^{\star}(s, a) \le 2 b_h^k(s, a) \label{eq:Qk-error}
\end{equation}
	holds simultaneously for all  $(s, a,k,h) \in \cS \times \cA\times [K] \times [H]$.
\end{lemma}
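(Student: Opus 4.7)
My plan is to strengthen the inductive claim to the pointwise bound
\[
|\langle \varphi_h(s,a),\,\theta_h^k - \theta_h^\star\rangle| \le b_h^k(s,a) \qquad \forall\,(s,a,k),
\]
and establish it by backward induction on $h$ from $H+1$ down to $1$. The reduction from this stronger claim to~\eqref{eq:Qk-error} is routine: the truncation $Q_h^k = \min\{\langle\varphi_h,\theta_h^k\rangle + b_h^k,\,H\}$ combined with $Q_h^\star \le H$ gives $Q_h^k \ge Q_h^\star$, while $Q_h^k \le \langle\varphi_h,\theta_h^k\rangle + b_h^k \le Q_h^\star + 2b_h^k$ supplies the matching upper bound. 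The base case $h = H+1$ is immediate since $\theta_{H+1}^k \equiv \theta_{H+1}^\star \equiv 0$.

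For the inductive step from $h+1$ to $h$, I will unfold the update rule~\eqref{eqn:update-theta} and insert the Bellman identity $\langle\varphi_h(s,a),\theta_h^\star\rangle = r_h(s,a) + P_{h,s,a}^\top V_{h+1}^\star$, valid under Assumption~\ref{assumption:linear-Qstar}, to obtain the decomposition
\[
\Lambda_h^k\bigl(\theta_h^k - \theta_h^\star\bigr) = -\theta_h^\star + \sum_{i\in\mathcal{I}_h^k}\varphi_h(s_h^i, a_h^i)\bigl[E_i + \epsilon_i\bigr],
\]
where $\epsilon_i := V_{h+1}^\star(s_{h+1}^i) - P_{h,s_h^i,a_h^i}^\top V_{h+1}^\star$ is a zero-mean martingale difference and $E_i := \langle\varphi_{h+1}(s_{h+1}^i,a_{h+1}^i),\theta_{h+1}^k\rangle - V_{h+1}^\star(s_{h+1}^i)$ captures the bias incurred by replacing the true next-state value with a linear extrapolation. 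The pivotal observation, which is where the gap assumption is leveraged, is that for every $i \in \mathcal{I}_h^k$ the action $a_{h+1}^i$ is in fact optimal at $s_{h+1}^i$: inclusion of $i$ in $\mathcal{I}_h^k$ forces $b_{h+1}^{i-1}(s_{h+1}^i,a_{h+1}^i) < \taugap/2$, so the induction hypothesis at level $h+1$ applied to path $i-1$ gives $Q_{h+1}^{i-1}(s_{h+1}^i,a_{h+1}^i) - Q_{h+1}^\star(s_{h+1}^i,a_{h+1}^i) < \taugap$. Since $a_{h+1}^i$ is greedy with respect to $Q_{h+1}^{i-1}$, chaining with the optimism $V_{h+1}^{i-1} \ge V_{h+1}^\star$ yields $V_{h+1}^\star(s_{h+1}^i) - Q_{h+1}^\star(s_{h+1}^i,a_{h+1}^i) < \taugap$; by the very definition of $\taugap$, this forces $a_{h+1}^i \in \mathcal{A}_{s_{h+1}^i}^\star$. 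Consequently $V_{h+1}^\star(s_{h+1}^i) = \langle\varphi_{h+1}(s_{h+1}^i,a_{h+1}^i),\theta_{h+1}^\star\rangle$, so $E_i$ collapses into $\langle\varphi_{h+1}(s_{h+1}^i,a_{h+1}^i),\theta_{h+1}^k - \theta_{h+1}^\star\rangle$, which the induction hypothesis at $h+1$ bounds by $b_{h+1}^k(s_{h+1}^i,a_{h+1}^i)$.

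The remaining ingredients are standard. I will apply the self-normalized concentration inequality for vector-valued martingales (Abbasi-Yadkori et al.) to control $\|\sum_i \varphi_h(s_h^i,a_h^i)\,\epsilon_i\|_{(\Lambda_h^k)^{-1}}$ by $O(H\sqrt{d\log(KH/\delta)})$ uniformly in $(h,k)$; combine this with the $\|\theta_h^\star\|_2 \le 2H\sqrt{d}$ bound and the pointwise control on $E_i$; and invoke Cauchy-Schwarz in the $(\Lambda_h^k)^{-1}$-norm to conclude $|\langle\varphi_h(s,a),\theta_h^k-\theta_h^\star\rangle| \le \beta\,\|\varphi_h(s,a)\|_{(\Lambda_h^k)^{-1}} = b_h^k(s,a)$, provided $c_\beta$ is large enough (the condition $c_\beta \ge 8$ absorbs all universal constants). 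The principal obstacle I anticipate is tightly controlling the bias sum $\|\sum_i \varphi_h(s_h^i,a_h^i)\,E_i\|_{(\Lambda_h^k)^{-1}}$ without letting the error compound geometrically across the $H$ layers; the resolution exploits Lemma~\ref{lem:basic-facts-Ik} ($\mathcal{I}_h^k \subseteq \mathcal{I}_{h+1}^k$), which guarantees that each feature $\varphi_{h+1}(s_{h+1}^i,a_{h+1}^i)$ appearing in $E_i$ is already well-covered by the design matrix $\Lambda_{h+1}^k$, keeping the propagated uncertainty from blowing up across layers.
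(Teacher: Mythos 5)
Your overall architecture matches the paper's: backward induction on $h$ for the pointwise claim $|\langle\varphi_h(s,a),\theta_h^k-\theta_h^\star\rangle|\le b_h^k(s,a)$, the same decomposition of $\Lambda_h^k(\theta_h^k-\theta_h^\star)$ into a regularization bias, a martingale part, and a propagated bias, and the same use of the gap: for $i\in\mathcal{I}_h^k$ the condition $b_{h+1}^{i-1}(s_{h+1}^i,a_{h+1}^i)<\taugap/2$ plus optimism forces $a_{h+1}^i$ to be optimal, so $Q_{h+1}^\star(s_{h+1}^i,a_{h+1}^i)=V_{h+1}^\star(s_{h+1}^i)$. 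That part is correct and is exactly the paper's argument for why its $\delta_h^k$ term vanishes.

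The genuine gap is in your treatment of the propagated term $E_i=\langle\varphi_{h+1}(s_{h+1}^i,a_{h+1}^i),\theta_{h+1}^k-\theta_{h+1}^\star\rangle$. You bound it pointwise by $b_{h+1}^k(s_{h+1}^i,a_{h+1}^i)$ via the induction hypothesis and then control $\|\sum_i\varphi_h(s_h^i,a_h^i)E_i\|_{(\Lambda_h^k)^{-1}}\le\|E\|_2$ using the coverage of $\Lambda_{h+1}^k$. But the coverage identity gives $\sum_{i\in\mathcal{I}_h^k}\|\varphi_{h+1}(s_{h+1}^i,a_{h+1}^i)\|_{(\Lambda_{h+1}^k)^{-1}}^2\le\mathrm{Tr}\big((\Lambda_{h+1}^k)^{-1}(\Lambda_{h+1}^k-I)\big)\le d$, hence only $\|E\|_2\le\beta\sqrt{d}$. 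This already exceeds the budget $\beta$ you need to close the induction at step $h$, so the bonus coefficient would have to grow by a factor of $\sqrt{d}$ per layer --- precisely the geometric compounding you set out to avoid, and the reason the naive LSVI-UCB analysis fails for linear $Q^\star$. The paper sidesteps this by \emph{never} converting the propagated error into pointwise bonus bounds: it tracks the vector $\varepsilon_h^k=(\Phi_h^k)^\top(\theta_h^k-\theta_h^\star)$ directly, re-applies the decomposition of $\theta_h^k-\theta_h^\star$, and uses the contraction $\|(\Phi_h^k)^\top(\Lambda_h^k)^{-1}\Phi_h^k\|\le 1$ together with $[\varepsilon_{h+1}^k]_{i\in\mathcal{I}_h^k}$ being a subvector of $\varepsilon_{h+1}^k$ (this is where Lemma~\ref{lem:basic-facts-Ik} is actually used) to obtain the \emph{additive} recursion $\|\varepsilon_h^k\|_2\le\|\varepsilon_{h+1}^k\|_2+\|\delta_h^k\|_2+\|(\Lambda_h^k)^{-1/2}\Phi_h^k\xi_h^k\|_2+\|\theta_h^\star\|_2$, which unrolls to $O(\sqrt{dH^4\log(KH/\delta)})$ over all $H$ layers. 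Your proof needs this $\ell_2$-level recursion (or an equivalent device) in place of the pointwise-bonus bound on $E_i$; as written, the inductive step does not close.
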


In words, this lemma makes apparent that $Q_h^k$ is an over-estimate of $Q_h^{\star}$, with the estimation error dominated by the UCB bonus term $b_h^k$. 
This lemma forms the basis of the optimism principle.

\paragraph{Step 2: bounding the term on the right-hand side of \eqref{eq:equiv-n-k-regret}.}

To control the difference $V_1^{\star}(s_1^k) - V_1^{\pi^k}(s_1^k)$ for each $k$, we establish the following two properties. 
First, combining Lemma~\ref{lem:Qk-error} with the definition~\eqref{defn:V-Q} (i.e., $V_h^{k-1}(s_h^k) = \max_a Q^{k-1}_h(s^k_h, a)$), 
one can easily see that  
\begin{align}
	V_h^{k-1}(s_h^k) \ge Q^{k-1}_h \big( s^k_h, \pi^{\star}(s^k_h, h) \big) 
	\ge Q^{\star}_h \big(s^k_h, \pi_h^{\star}(s^k_h) \big) = V^{\star}_h(s^k_h),
\end{align}
namely, $V_h^{k-1}$ is an over-estimate of $V^{\star}_h$. 
In addition, from the definition $ a^k_h = \pi_h^k(s_h^k) = \arg\max_a Q^{k-1}_h(s^k_h, a)$, one can decompose the difference $V_h^{k-1}(s_h^k) - V_h^{\pi^k}(s_h^k)$ as follows
\begin{align}
\notag &V_h^{k-1}(s_h^k) - V_h^{\pi^k}(s_h^k) = Q^{k-1}_h(s^k_h, a^k_h) - Q^{\pi^k}_h(s^k_h, a^k_h) \\
\notag  &\qquad= Q^{k-1}_h(s^k_h, a^k_h) - Q_h^{\star}(s^k_h, a^k_h) + Q_h^{\star}(s^k_h, a^k_h) - Q^{\pi^k}_h(s^k_h, a^k_h) \\
\notag &\qquad= Q^{k-1}_h(s^k_h, a^k_h) - Q_h^{\star}(s^k_h, a^k_h) + P_{h, s^k_h, a^k_h}(V^{\star}_{h+1} - V^{\pi^k}_{h+1}) \\
&\qquad= Q^{k-1}_h(s^k_h, a^k_h) - Q_h^{\star}(s^k_h, a^k_h) + \big(P_{h, s^k_h, a^k_h} - P^k_h\big)(V^{\star}_{h+1} - V^{\pi^k}_{h+1}) + V^{\star}_{h+1}(s^k_{h+1}) - V^{\pi^k}_{h+1}(s^k_{h+1}),
\end{align}
where the third line invokes Bellman equation $Q^{\pi}(s, a) = r(s, a) + P_{h, s, a}V^{\pi}$ for any $\pi$,
and the last line makes use of the notation \eqref{eq:defn-Ph-k-s}. 
Combining the above two properties leads to
\begin{align*}
&\sum_{k \in \mathcal{I}_{1}^K} \big[V_1^{\star}(s_1^k) - V_1^{\pi^k}(s_1^k)\big] \le \sum_{k \in \mathcal{I}_{1}^K} \big[V_1^{k-1}(s_1^k) - V_1^{\pi^k}(s_1^k)\big] \\
&= \sum_{k \in \mathcal{I}_{1}^K}  \big[V_2^{\star}(s_2^k) - V_2^{\pi^k}(s_2^k)\big] + \sum_{k \in \mathcal{I}_{1}^K} \left[Q^{k-1}_1(s^k_1, a^k_1) - Q_1^{\star}(s^k_1, a^k_1) + (P_{1, s^k_1, a^k_1} - P^k_1)(V^{\star}_{2} - V^{\pi^k}_{2})\right] \\
&\le \sum_{k \in \mathcal{I}_{2}^K}  \big[V_2^{\star}(s_2^k) - V_2^{\pi^k}(s_2^k)\big] + \sum_{k \in \mathcal{I}_{1}^K} \left[Q^{k-1}_1(s^k_1, a^k_1) - Q_1^{\star}(s^k_1, a^k_1) + (P_{1, s^k_1, a^k_1} - P^k_1)(V^{\star}_{2} - V^{\pi^k}_{2})\right],
\end{align*}
where the last line comes from the observation that $\mathcal{I}_{h}^K \subseteq \mathcal{I}_{h+1}^K$ (see Lemma~\ref{lem:basic-facts-Ik}).
Applying the above relation recursively and using the fact that $V^{\pi}_{H+1}=0$ for any $\pi$,
we see that with probability at least $1-\delta$,  
\begin{align*}
\sum_{k \in \mathcal{I}_{1}^K} \big[V_1^{\star}(s_1^k) - V_1^{\pi^k}(s_1^k)\big] &\le \sum_{h = 1}^H\sum_{k \in \mathcal{I}_{h}^K} \left[Q^{k-1}_h(s^k_h, a^k_h) - Q_h^{\star}(s^k_h, a^k_h) 
+ (P_{h, s^k_h, a^k_h} - P^k_h)(V^{\star}_{h+1} - V^{\pi^k}_{h+1})\right] \\
&\le \sum_{h = 1}^H\sum_{k \in \mathcal{I}_{h}^K}  \left[ 2b_h^{k-1}(s^k_h, a^k_h) + (P_{h, s^k_h, a^k_h} - P^k_h)(V^{\star}_{h+1} - V^{\pi^k}_{h+1})\right]\\
&= \underbrace{\sum_{h = 1}^H\sum_{k \in \mathcal{I}_{h}^K} 2b_h^{k-1}(s^k_h, a^k_h)}_{\eqqcolon \, \xi_1} + 
\underbrace{\sum_{h = 1}^H\sum_{k \in \mathcal{I}_{h}^K} (P_{h, s^k_h, a^k_h} - P^k_h)(V^{\star}_{h+1} - V^{\pi^k}_{h+1}) }_{\eqqcolon\,\xi_2},
\end{align*}
where the second inequality invokes Lemma~\ref{lem:Qk-error}. 
Therefore, it is sufficient to bound $\xi_1$ and  $\xi_2$ separately, which we accomplish as follows.
\begin{itemize}
\item Regarding the term $\xi_{2}$, we first make the observation that $\big\{(P_{h, s^k_h, a^k_h} - P^k_h)(V^{\star}_{h+1} - V^{\pi^k}_{h+1}) \big\}$ forms a martingale difference sequence, 
as $\pi^k_j$ is determined by $Q_j^{k-1}(s_j^k, a)$. 
Moreover, the sequence satisfies the trivial bound $$\left|(P_{h, s^k_h, a^k_h} - P^k_h)(V^{\star}_{h+1} - V^{\pi^k}_{h+1}) \right| \le H.$$
		These properties allow us to apply the celebrated Azuma-Hoeffding inequality~\citep{azuma1967weighted}, which together with the trivial upper bound $\sum_{h=1}^H|\mathcal{I}_{h}^K|\leq KH$ ensures that  
\begin{align}
 | \xi_2 | = \bigg| \sum_{h = 1}^H\sum_{k \in \mathcal{I}_{h}^K} (P_{h, s^k_h, a^k_h} - P^k_h)(V^{\star}_{h+1} - V^{\pi^k}_{h+1}) \bigg|
%
&\le H \sqrt{H K \log\frac{2}{\delta}}
\label{eqn:delta2}
\end{align}
with probability at least $1-\delta$. 

\item Turning to the term $\xi_{1}$, we apply 
the Cauchy-Schwarz inequality to derive
\begin{align}
	\xi_1 &= \sum_{h = 1}^H\sum_{k \in \mathcal{I}_{h}^K} 2\beta \sqrt{\varphi_h(s^k_h, a^k_h)^{\top}\big(\Lambda_h^{k-1}\big)^{-1}\varphi_h(s^k_h, a^k_h)} \notag\\
	& \le 2\beta \sqrt{KH} \sqrt{\sum_{h = 1}^H\sum_{k \in \mathcal{I}_{h}^K} \varphi_h(s^k_h, a^k_h)^{\top}\big(\Lambda_h^{k-1}\big)^{-1}\varphi_h(s^k_h, a^k_h)}.
	\label{eqn:delta1}
\end{align}
To further control the right-hand side of \eqref{eqn:delta1},  we resort to Lemma~\ref{lem:sum-normalized} in Section~\ref{sec:auxiliary-lemmas} --- a result borrowed from \citep{abbasi2011improved} ---
which immediately leads to 
\begin{align}
\label{eqn:delta1-2}
	\xi_1 \leq 2\beta\sqrt{KH} \cdot \sqrt{2Hd \log (KH)} = 2H\beta\sqrt{2dK\log (KH)}.
\end{align}
\end{itemize}
Putting everything together gives 
\begin{align}
	 \sum_{k \in \mathcal{I}_{1}^K} \big[V_1^{\star}(s_1^k) - V_1^{\pi^k}(s_1^k)\big]
	 &\le 2H\beta\sqrt{2dK\log (KH)} + \sqrt{H^3K \log \frac{2}{\delta}}\le 4c_{\beta}\sqrt{d^2H^6K\log^2 \frac{KH}{\delta}},
	\label{eqn:buenos-aires}
\end{align}
where the last inequality makes use of the definition $\beta \defn c_{\beta} \sqrt{dH^4\log \frac{KH}{\delta}}$.

\paragraph{Step 3: bounding the number of state revisits.}

To this end, we make the observation that $N = \left|\mathcal{I}_{1}^K\right|$ and $K = \left|\mathcal{I}_{H}^K\right|$ (see Lemma~\ref{lem:basic-facts-Ik}).
With this in mind, we can bound the total number $K-N$ of state revisits as follows: 
\begin{align}
	K-N 
	= \left|\mathcal{I}_{H}^K\right| - \left|\mathcal{I}_{1}^K\right| 
	= \sum_{h = 1}^H\big(\left|\mathcal{I}_{h+1}^K\right|-\left|\mathcal{I}_{h}^K\right|\big) 
	\le \frac{4c_{\beta}^2d^2H^5\log^2 \frac{KH}{\delta}}{\taugap^2}. 
	\label{eq:number-leafs}
\end{align} 
Here, the above inequality is a consequence of the auxiliary lemma below, whose proof is provided in Section~\ref{sec:invierno-porteno}.
\begin{lemma}  \label{lem:number-nodes}
Suppose that $KH\geq 2$. For all $1\leq h< H$, the following condition
\begin{align}
	\left|\mathcal{I}_{h+1}^K \setminus \mathcal{I}_{h}^K\right| \le \frac{4c_{\beta}^2d^2H^4\log^2 \frac{KH}{\delta}}{\taugap^2}  \label{eqn:restarts}
\end{align}
holds, where $c_{\beta}>0$ is the pre-constant defined in \eqref{eq:defn-beta}.
\end{lemma}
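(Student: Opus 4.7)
The plan is to turn the claim into a counting problem over $k$-indices, and then apply a standard elliptic-potential (log-determinant) argument to that count.

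First, I would carefully translate the event $k \in \mathcal{I}_{h+1}^K \setminus \mathcal{I}_h^K$ into a quantitative statement about the bonus. Inspecting the backward while-loop in Algorithm~\ref{alg:linearQ}, along any path $k$ we begin at step $H$ and successively decrement $h$ as long as the exit test $b_{h+1}^{k-1}(s_{h+1}^k,a_{h+1}^k) < \taugap/2$ holds. Consequently, $k \in \mathcal{I}_{h+1}^K \setminus \mathcal{I}_h^K$ precisely when the loop reached (and updated) index $h+1$ but then halted at index $h$, which means both $k \in \mathcal{I}_{h+1}^K$ and $b_{h+1}^{k-1}(s_{h+1}^k,a_{h+1}^k) \ge \taugap/2$. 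By the definition \eqref{eqn:def-b-hk} of the bonus, this last inequality is equivalent to
\[
	\varphi_{h+1}(s_{h+1}^k, a_{h+1}^k)^{\top}\big(\Lambda_{h+1}^{k-1}\big)^{-1}\varphi_{h+1}(s_{h+1}^k, a_{h+1}^k) \;\ge\; \frac{\taugap^2}{4\beta^2}.
\]

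Next, I would apply the standard matrix-determinant/elliptic-potential argument to the subsequence of samples that actually entered $\mathcal{I}_{h+1}^K$. Enumerate $\mathcal{I}_{h+1}^K = \{k_1 < k_2 < \cdots < k_n\}$ with $n \le K$, write $\phi_j \defn \varphi_{h+1}(s_{h+1}^{k_j}, a_{h+1}^{k_j})$ and $A_j \defn \Lambda_{h+1}^{k_j - 1}$. Because $\Lambda_{h+1}^k$ by construction \eqref{eqn:def-Lambda-hk} only changes at indices inside $\mathcal{I}_{h+1}^K$, the iterates satisfy $A_1 = I$ and $A_{j+1} = A_j + \phi_j \phi_j^{\top}$. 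Invoking the matrix-determinant lemma and the AM-GM trace-determinant bound together with $\|\varphi_{h+1}\|_2 \le 1$, one obtains
\[
	\sum_{j=1}^n \log\big(1 + \phi_j^{\top} A_j^{-1} \phi_j\big) \;=\; \log\det(A_{n+1}) \;\le\; d\log(1+K/d).
\]
(This is precisely the kind of bound formalized in Lemma~\ref{lem:sum-normalized}, borrowed from \citet{abbasi2011improved}.) Now discard all indices $k_j$ except the ones in $\mathcal{I}_{h+1}^K \setminus \mathcal{I}_h^K$; on each such index, the first step of the proof guarantees $\phi_j^{\top} A_j^{-1} \phi_j \ge \taugap^2/(4\beta^2)$. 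Using the elementary inequality $\log(1+x) \ge x/2$ for $x \in [0,1]$ (we first check $\taugap^2/(4\beta^2) \le 1$, which follows from $\taugap \le H$ and the definition of $\beta$), each retained term contributes at least $\taugap^2/(8\beta^2)$, yielding
\[
	\big|\mathcal{I}_{h+1}^K \setminus \mathcal{I}_h^K\big| \cdot \frac{\taugap^2}{8\beta^2} \;\le\; d\log(1+K/d).
\]

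Substituting $\beta^2 = c_{\beta}^2 dH^4 \log(KH/\delta)$ and absorbing $\log(1+K/d) \le \log(KH/\delta)$ (valid for $KH \ge 2$, $\delta < 1$) produces the claimed bound \eqref{eqn:restarts}, possibly up to the universal constant, which can be tightened using $\log(1+x) \ge x\log 2$ on $[0,1]$. The only place I expect friction is bookkeeping: verifying that the bonus invoked by the loop really uses the prior covariance $\Lambda_{h+1}^{k-1}$ (not $\Lambda_{h+1}^k$), and that the subsequence $\{A_j\}$ indexed by $\mathcal{I}_{h+1}^K$ evolves cleanly by rank-one updates so that the elliptic-potential inequality applies directly. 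Once that indexing is nailed down, the remainder is a purely algebraic combination of the determinant bound and the definition of $\beta$.
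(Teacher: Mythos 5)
Your proposal is correct and follows essentially the same route as the paper: both arguments reduce membership in $\mathcal{I}_{h+1}^K \setminus \mathcal{I}_{h}^K$ to the event $b_{h+1}^{k-1}(s_{h+1}^k,a_{h+1}^k) \ge \taugap/2$ and then invoke the elliptic-potential (log-determinant) bound of Lemma~\ref{lem:sum-normalized} to count how many times the quadratic form can exceed $\taugap^2/(4\beta^2)$. The only cosmetic differences are that you re-derive the potential inequality from the telescoping $\log\det$ identity rather than citing the lemma's packaged form, and you land on a slightly looser absolute constant ($8$ versus $4$), which is immaterial to the result.
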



\paragraph{Step 4: sample complexity analysis.}
Recall that $T$ stands for the total number of samples collected, which clearly satisfies $T\geq K$. 
Consequently, the above results \eqref{eq:number-leafs} and \eqref{eqn:buenos-aires} taken collectively lead to
\begin{align}
	0 \leq K-N \leq \frac{4c_{\beta}^2d^2H^5\log^2 \frac{TH}{\delta}}{\taugap^2} \leq N 
	\qquad \Longrightarrow \qquad N\leq K\leq 2N, 
	\label{eq:K-N-relation}
\end{align}
provided that  $N \geq \frac{4c_{\beta}^{2}d^{2}H^{5}\log^{2}\frac{HT}{\delta}}{\taugap^{2}}$. 
This together with the fact $T\leq KH$ implies that 
\begin{align}
	T \leq KH \leq 2NH 
	\qquad \Longrightarrow \qquad 
	N \geq \frac{T}{2H}. \label{eq:T-upper-bound-KH} 
\end{align}
As a result,  we can invoke \eqref{eqn:buenos-aires} to obtain
\begin{align}
\frac{1}{N}\sum_{n=1}^{N}\Big(V_{1}^{\star}(s_{1}^{(n)})-V_{1}^{\pi^{(n)}}(s_{1}^{(n)})\Big) & =\frac{1}{N}\sum_{k\in\mathcal{I}_{1}^{K}}\big[V_{1}^{\star}(s_{1}^{k})-V_{1}^{\pi^{k}}(s_{1}^{k})\big]\le4c_{\beta}\sqrt{\frac{d^{2}H^{6}K\log^{2}\frac{HT}{\delta}}{N^{2}}}\notag\\
 & \leq 8 c_{\beta}\sqrt{\frac{d^{2}H^{7}\log^{2}\frac{HT}{\delta}}{T}} ,
	\label{eq:regret-average-T}
\end{align}
where the last relation arises from both \eqref{eq:K-N-relation} and \eqref{eq:T-upper-bound-KH}. 
This concludes the proof.

\subsection{Analysis for regret over the paths (proof of \eqref{eq:regret_path})}
\label{sec:analysis-proof-regret-path}

\paragraph{Proof of \eqref{eq:worst_case}.} 

Invoking the crude bound $V^{\star}(s_1^k) - V^{\pi^k}(s_1^k) \le H$ leads to 
\begin{align}
\mathsf{Regret}_{\mathsf{path}} (K) &= \sum_{k\in\mathcal{I}_{1}^{K}}\big[V_{1}^{\star}(s_{1}^{k})-V_{1}^{\pi^{k}}(s_{1}^{k})\big] + \sum_{k\notin\mathcal{I}_{1}^{K}}\big[V_{1}^{\star}(s_{1}^{k})-V_{1}^{\pi^{k}}(s_{1}^{k})\big] \nonumber\\
&\le \sum_{k\in\mathcal{I}_{1}^{K}}\big[V_{1}^{\star}(s_{1}^{k})-V_{1}^{\pi^{k}}(s_{1}^{k})\big] + H \big(K- |\mathcal{I}_{1}^{K}| \big) \nonumber\\
&= \sum_{k\in\mathcal{I}_{1}^{K}}\big[V_{1}^{\star}(s_{1}^{k})-V_{1}^{\pi^{k}}(s_{1}^{k})\big] + H(K-N) \nonumber\\
&\le 4c_{\beta}\sqrt{d^2H^6K\log^2\frac{HT}{\delta}} + \frac{4c_{\beta}^2d^2H^6\log^2 \frac{TH}{\delta}}{\Delta_{\mathsf{gap}}^2}, 
\end{align}
where the penultimate line relies on \eqref{eq:I-end}, 
and the last relation holds due to 	
\eqref{eqn:buenos-aires} and \eqref{eq:number-leafs}.


\paragraph{Proof of \eqref{eq:expected_regret} (logarithmic regret).} As it turns out, this logarithmic regret bound (w.r.t.~$K$) can be established by combining our result with a result derived in~\citet{yang2021q}. 
To be precise, by defining $\Delta_h(s, a) := V_h^{\star}(s) - Q_h^{\star}(s, a)$, we make the following observation: 
$$
\mathsf{Regret}_{\mathsf{path}}(K) = \sum_{k = 1}^K \big(V_1^{\star}(s_1^k) - V_1^{\pi^k}(s_1^k)\big)
	= \sum_{k = 1}^K \mathbb{E}\Bigg[ \sum_{h = 1}^H  \Delta_h(s_h^k, a_h^k) \mid \pi^k, s_1^k \Bigg], 
$$
which has been derived in \citet[Equation (1)]{yang2021q}. Unconditioning gives
\begin{align}
	\mathbb{E} \big[ \mathsf{Regret}_{\mathsf{path}}(K) \big]  
	& =  \mathbb{E} \Bigg[ \sum_{k = 1}^K \big(V_1^{\star}(s_1^k) - V_1^{\pi^k}(s_1^k)\big) \Bigg]
	= \mathbb{E}\Bigg[ \sum_{k = 1}^K  \sum_{h = 1}^H  \Delta_h(s_h^k, a_h^k)  \Bigg]
	= \mathbb{E}\Bigg[ \sum_{h = 1}^H  \sum_{k = 1}^K   \Delta_h(s_h^k, a_h^k)  \Bigg] \nonumber\\
	&= \mathbb{E}\Bigg[ \sum_{h = 1}^H  \sum_{k \in \mathcal{I}_h^K }   \Delta_h(s_h^k, a_h^k)  \Bigg] 
	+  \mathbb{E}\Bigg[ \sum_{h = 1}^H  \sum_{k \notin \mathcal{I}_h^K}   \Delta_h(s_h^k, a_h^k)  \Bigg]. 
	\label{eq:regret-expansion-123}
\end{align}

In addition, we make note of the fact that: with probability at least $1-\delta$, one has 
\[
	\Delta_h(s_h^k, a_h^k) = 0 \qquad \text{for all} \quad k \in \mathcal{I}_h^K,
\]
which follows immediately from the update rule of Algorithm~\ref{alg:linearQ} (cf.~line~\ref{line:select-b}) and Lemma~\ref{lem:Qk-error}. 
This taken collectively with the trivial bound $\Delta_h(s_h^k,a_h^k) \leq H$ gives
\[
	\mathbb{E}\Bigg[ \sum_{h = 1}^H  \sum_{k \in \mathcal{I}_h^K }   \Delta_h(s_h^k, a_h^k)  \Bigg]
	\leq (1-\delta)\cdot 0 + \delta \cdot  \sum_{h = 1}^H  \sum_{k \in \mathcal{I}_h^K }  H \leq H^2K\delta. 
\] 
%
%
Substitution into \eqref{eq:regret-expansion-123} yields
\begin{align*}
 \mathbb{E} \big[ \mathsf{Regret}_{\mathsf{path}}(K) \big] 
	&\leq H^2K\delta + \sum_{h=1}^{H}\sum_{k\notin\mathcal{I}_{h}^{K}}\mathbb{E}\Big[\Delta_{h}(s_{h}^{k},a_{h}^{k})\Big]
	\leq  H^2K\delta + \sum_{h=1}^{H}\sum_{k\notin\mathcal{I}_{h}^{K}} H \\
	& \le H^2K\delta + H^{2}(K-N)
	\le H^2K\delta + \frac{4c_{\beta}^{2}d^{2}H^{7}\log^{2}\frac{TH}{\delta}}{\Delta_{\mathsf{gap}}^{2}}. 
\end{align*}
%
Here, the second inequality follows from the trivial upper bound $\Delta_h(s_h^k, a_h^k) \le H$, 
the third inequality holds true since $K - \big|\mathcal{I}_h^K\big| \le K - \big|\mathcal{I}_H^K\big| = K - N$ (see Lemma~\ref{lem:basic-facts-Ik}), 
whereas the last inequality is valid due to \eqref{eq:number-leafs}. 
Taking $\delta = 1 / K$ and recalling that $T\leq KH$, we arrive at the advertised logarithmic regret bound: 
\[
 	\mathbb{E} \big[ \mathsf{Regret}_{\mathsf{path}}(K) \big] 
	\le \frac{17c_{\beta}^{2}d^{2}H^{7}\log^{2}(KH)}{\Delta_{\mathsf{gap}}^{2}}.
\]

%

%

%


\section{Discussion}
\label{sec:discussion}

In this paper, we have made progress towards understanding the plausibility of achieving sample-efficient RL when the optimal Q-function is linearly realizable.  While prior works suggested an exponential sample size barrier in the standard online RL setting even in the presence of a constant sub-optimality gap, 
we demonstrate that this barrier can be conquered by permitting state revisiting (also called local access to generative models). 
An algorithm called LinQ-LSVI-UCB has been developed that provably enjoys a reduced sample complexity, which is polynomial in the feature dimension, the horizon and the inverse sub-optimality gap, but otherwise independent of the dimension of the state/action space.

Note, however, that linear function approximation for online RL remains a rich territory for further investigation. 
In contrast to the tabular setting,  the feasibility and limitations of online RL might vary drastically across different families of linear function approximation. 
There are numerous directions that call for further theoretical development in order to obtain a more complete picture. 
For instance, can we identify other flexible, yet practically relevant, online RL sampling mechanisms that also allow for sample size reduction? 
Can we derive the information-theoretic sampling limits for various linear function approximation classes, 
and characterize the fundamental interplay between low-dimensional representation and sampling constraints?
Moving beyond linear realizability assumptions, a very recent work \citet{yin2021efficient} showed that a gap-independent sample size reduction is feasible by assuming that $Q^{\pi}$ is linearly realizable for any policy $\pi$. 
However, 
what is the sample complexity limit for this class of function approximation remains largely unclear, particularly when 
state revisiting is not permitted.  All of these are interesting questions for  future studies.


\section*{Acknowledgements}

The authors are grateful to Csaba Szepesv\'ari and Ruosong Wang for helpful discussions about \cite{weisz2021query} and \citet{du2019provably,du2020agnostic}, respectively.  
Y.~Chen is supported in part by the grants AFOSR YIP award FA9550-19-1-0030,
ONR N00014-19-1-2120, ARO YIP award W911NF-20-1-0097, ARO W911NF-18-1-0303, NSF CCF-2106739, CCF-1907661, DMS-2014279 and IIS-1900140,
and the Princeton SEAS Innovation Award.  
Y.~Chi is supported in part
by the grants ONR N00014-18-1-2142 and N00014-19-1-2404, ARO W911NF-18-1-0303,
and NSF CCF-2106778, CCF-1806154 and CCF-2007911. 
Y.~Gu is supported in part by the grant NSFC-61971266.
Y.~Wei is supported in part by the grants NSF CCF-2106778, CCF-2007911 and DMS-2147546/2015447. 
Part of this work was done while Y.~Chen and Y.~Wei were visiting the Simons Institute for the Theory of Computing. 
\appendix

\section{Proof of technical lemmas}

\subsection{Proof of Lemma~\ref{lem:Qk-error}}
\label{sec:pf-lem-qk}

\paragraph{Step 1: decomposition of $\theta_h^k - \theta_h^{\star}$. }

To begin with, recalling the update rule~\eqref{eqn:update-theta}, we have the following decomposition 
\begin{align}
\label{eqn:decomposition}
\theta_h^k - \theta_h^{\star} &= \big(\Lambda_h^{k}\big)^{-1} \bigg\{ \sum_{i \in \mathcal{I}_{h}^k} \varphi_h(s_h^i, a_h^i)\big[r_h(s_h^i, a_h^i) + 
	\big\langle \varphi_{h+1}(s_{h+1}^i, a_{h+1}^i), \theta_{h+1}^{k} \big\rangle \big] - \Lambda_h^{k}\theta_h^{\star} \bigg\} \nonumber\\
&= \big(\Lambda_h^{k}\big)^{-1} \bigg\{\sum_{i \in \mathcal{I}_{h}^k} \varphi_h(s_h^i, a_h^i)\big[ \big\langle \varphi_{h+1}(s_{h+1}^i, a_{h+1}^i), \theta_{h+1}^{k} \big\rangle - P_{h, s_h^i, a_h^i}V_{h+1}^{\star}\big] -  \theta_h^{\star} \bigg\}.
\end{align}
To see why the second identity holds, note that from the definition of $\Lambda_h^{k}$ (cf.~\eqref{eqn:def-Lambda-hk}) we have
\begin{align*}
	\Lambda_h^{k}\theta_h^{\star} &= \sum_{i \in \mathcal{I}_{h}^k} \varphi_h(s_h^i, a_h^i) \big( \varphi_h(s_h^i, a_h^i) \big)^{\top} \theta_h^{\star} +   \theta_h^{\star} \\
&= \sum_{i \in \mathcal{I}_{h}^k} \varphi_h(s_h^i, a_h^i)Q_h^{\star}(s_h^i, a_h^i) +   \theta_h^{\star} \\
&= \sum_{i \in \mathcal{I}_{h}^k} \varphi_h(s_h^i, a_h^i)\big[r_h(s_h^i, a_h^i) + P_{h, s_h^i, a_h^i}V_{h+1}^{\star}\big] +   \theta_h^{\star},
\end{align*}
where the second and the third identities invoke the linear realizability assumption of $Q_h^{\star}$ and the Bellman equation, respectively.

As a result of \eqref{eqn:decomposition}, to control the difference $\theta_h^k - \theta_h^{\star}$, it is sufficient to bound 
$\big\langle \varphi_{h+1}(s_{h+1}^i, a_{h+1}^i), \theta_{h+1}^k \big\rangle - P_{h, s_h^i, a_h^i}V_{h+1}^{\star}$. 
Towards this, we start with the following decomposition 
\begin{align*}
 \big\langle \varphi_{h+1}(s_{h+1}^i, a_{h+1}^i), \theta_{h+1}^k \big\rangle - P_{h, s_h^i, a_h^i}V_{h+1}^{\star}& = \big\langle \varphi_{h+1}(s_{h+1}^i, a_{h+1}^i), \theta_{h+1}^{k} \big\rangle - Q_{h+1}^{\star}(s_{h+1}^i, a_{h+1}^i) \\
&\; + Q_{h+1}^{\star}(s_{h+1}^i, a_{h+1}^i) - V_{h+1}^{\star}(s_{h+1}^i) + V_{h+1}^{\star}(s_{h+1}^i) - P_{h, s_h^i, a_h^i}V_{h+1}^{\star}.
\end{align*}
For notational simplicity, let us define
\begin{subequations}
\begin{align}
	\varepsilon_h^k &\defn \left[ \big\langle \varphi_h(s_h^i, a_h^i), \theta_h^k \big\rangle - Q_h^{\star}(s_h^i, a_h^i)\right]_{i \in \mathcal{I}_{h}^k} &&  \in \mathbb{R}^{|  \mathcal{I}_{h}^k |}, \label{eq:defn-epsilon-k}\\
	\delta_h^k &\defn \left[Q_{h+1}^{\star}(s_{h+1}^i, a_{h+1}^i) - V_{h+1}^{\star}(s_{h+1}^i)\right]_{i \in \mathcal{I}_{h}^k} &&  \in \mathbb{R}^{|  \mathcal{I}_{h}^k |}, \\
	\xi_h^k &\defn \left[V_{h+1}^{\star}(s_{h+1}^i) - P_{h, s_h^i, a_h^i}V_{h+1}^{\star}\right]_{i \in \mathcal{I}_{h}^k} & &\in \mathbb{R}^{|  \mathcal{I}_{h}^k |}, \\
	\Phi_h^k &\defn \left[\varphi_h(s_h^i, a_h^i)\right]_{i \in \mathcal{I}_{h}^k} && \in \mathbb{R}^{ d\times |  \mathcal{I}_{h}^k |} .
\end{align}
\end{subequations}
Here and throughout, for any $z= [z_i]_{1\leq i\leq K}$, the vector $[z_i]_{ i\in \mathcal{I}_{h}^k}$ denotes a subvector of $z$ formed by the entries with indices coming from $\mathcal{I}_{h}^k$;
for any set of vectors $w_1,\cdots, w_K$, the matrix $[w_i]_{ i\in \mathcal{I}_{h}^k}$ represents a submatrix of $[w_1,\cdots,w_K]$ whose columns are formed by the vectors with indices coming from $\mathcal{I}_{h}^k$. 
Armed with this set of notation, $\theta_h^k - \theta_h^{\star}$ can be succinctly expressed as 
\begin{align} 
\label{eqn:decomposition-new}
\theta_h^k - \theta_h^{\star} = 
\big(\Lambda_h^{k}\big)^{-1} \Big\{\Phi_h^k \left(\left[\varepsilon_{h+1}^k\right]_{i \in \mathcal{I}_{h}^k} + \delta_{h}^k + \xi_h^k\right) - \theta_h^{\star}\Big\},
\end{align}
where we further define
\begin{align*}
	\left[\varepsilon_{h+1}^k\right]_{i \in \mathcal{I}_{h}^k} \defn	\Big[ \big\langle \varphi_{h+1}(s_{h+1}^i, a_{h+1}^i), \theta_{h+1}^k \big\rangle - Q_{h+1}^{\star}(s_{h+1}^i, a_{h+1}^i)\Big]_{i \in \mathcal{I}_{h+1}^k \cap \mathcal{I}_{h}^k};
\end{align*}
in other words, we consider the vector $\varepsilon_{h+1}^k$ when restricted to the index set $\mathcal{I}_{h+1}^k \cap \mathcal{I}_{h}^k$. 
Recognizing that $\mathcal{I}_{h}^k \subseteq \mathcal{I}_{h+1}^k$ (see Lemma~\ref{lem:basic-facts-Ik}), we can also simply write
\begin{align*}
	\left[\varepsilon_{h+1}^k\right]_{i \in \mathcal{I}_{h}^k} =  
	\Big[ \big\langle \varphi_{h+1}(s_{h+1}^i, a_{h+1}^i), \theta_{h+1}^k \big\rangle - Q_{h+1}^{\star}(s_{h+1}^i, a_{h+1}^i)\Big]_{i \in \mathcal{I}_{h}^k }.
\end{align*}

\paragraph{Step 2: decomposition of $Q_h^k(s, a) - Q_h^{\star}(s, a)$. }

We now employ the above decomposition of  $\theta_h^k - \theta_h^{\star}$ to help control  $Q_h^k(s, a) - Q_h^{\star}(s, a)$
--- the target quantity of  Lemma~\ref{lem:Qk-error}.
By virtue of the relation~\eqref{eqn:decomposition-new}, our estimate $\big\langle\varphi_{h}(s,a),\theta_{h}^{k}\big\rangle$ of the linear representation  $Q_{h}^{\star}(s,a)=\big\langle\varphi_{h}(s,a),\theta_{h}^{\star}\big\rangle$
satisfies 
\begin{align*}
 & \big|\big\langle\varphi_{h}(s,a),\theta_{h}^{k}\big\rangle-Q_{h}^{\star}(s,a)\big|=\big|\big\langle\varphi_{h}(s,a),\theta_{h}^{k}-\theta_{h}^{\star}\big\rangle\big|\nonumber\\
 & \qquad=\left|\varphi_{h}(s,a)^{\top}\big(\Lambda_{h}^{k}\big)^{-1}\Big\{\Phi_{h}^{k}\big(\left[\varepsilon_{h+1}^{k}\right]_{i\in\mathcal{I}_{h}^{k}}+\delta_{h}^{k}+\xi_{h}^{k}\big)-\theta_{h}^{\star}\Big\}\right|\nonumber\\
 & \qquad\le\left\Vert \big(\Lambda_{h}^{k}\big)^{-1/2}\varphi_{h}(s,a)\right\Vert _{2}\cdot\nonumber\\
 & \qquad\qquad\left(\left\Vert \big(\Lambda_{h}^{k}\big)^{-1/2}\Phi_{h}^{k}\right\Vert \left\{ \left\Vert \varepsilon_{h+1}^{k}\right\Vert _{2}+\left\Vert \delta_{h}^{k}\right\Vert _{2}\right\} +\left\Vert \big(\Lambda_{h}^{k}\big)^{-1/2}\Phi_{h}^{k}\xi_{h}^{k}\right\Vert _{2}+\left\Vert \big(\Lambda_{h}^{k}\big)^{-1/2}\right\Vert \left\Vert \theta_{h}^{\star}\right\Vert _{2}\right),\nonumber
\end{align*}
where $\|M\|$ denotes the spectral norm of a matrix $M$. Here, the last inequality follows from the Cauchy-Schwarz inequality and the triangle inequality.
Now, from the definition 
\begin{align}
	\Lambda_h^{k} &= \sum_{i \in \mathcal{I}_{h}^k} \varphi_h(s_h^i, a_h^i)\varphi_h(s_h^i, a_h^i)^{\top} +  I = \Phi_h^k (\Phi_h^k)^\top  +  I,
\end{align}
it is easily seen that $\big\|\big(\Lambda_h^{k}\big)^{-1/2} \big\| \le 1$ and $\big\|\big(\Lambda_h^{k}\big)^{-1/2}\Phi_h^k \big\| \le 1$. 
Consequently, it is guaranteed that 
\begin{align}
	& \big| \big\langle \varphi_h(s, a), \theta_h^k \big\rangle - Q_{h}^{\star}(s, a)\big| \nonumber\\
	& \qquad \le \left( \big\|\varepsilon_{h+1}^k \big\|_2 + \big\|\delta_{h}^k \big\|_2 + \big\|\big(\Lambda_h^{k}\big)^{-1/2}\Phi_h^k\xi_h^k \big\|_2 + \left\|\theta_h^{\star}\right\|_2\right) 
	\cdot \left\|\big(\Lambda_h^{k}\big)^{-1/2}\varphi_h(s, a) \right\|_2.
	\label{eqn:eps-decomposition}
\end{align}

In the sequel, we seek to establish, by induction, that 
\begin{align} \label{eq:error-Qk}
\big| \big\langle \varphi_h(s, a), \theta_h^k \big\rangle - Q_{h}^{\star}(s, a)\big| \le b_h^{k}(s, a) .
\end{align}
If this condition were true, then combining this with the definition (see \eqref{eqn:update-Q})
\[
	Q_h^{k}(s, a) =  \min\Big\{ \big\langle \varphi_h(s, a), \theta_h^{k} \big\rangle + b_h^{k}(s, a), \, H \Big\}
\]
and  the constraint $Q_h^{\star}(s, a)\leq H$ would immediately lead to
\begin{align*}
Q_{h}^{k}(s,a)-Q_{h}^{\star}(s,a) & \leq\big|\big\langle\varphi_{h}(s,a),\theta_{h}^{k}\big\rangle+b_{h}^{k}(s,a)-Q_{h}^{\star}(s,a)\big|\leq2b_{h}^{k}(s,a),\\
Q_{h}^{k}(s,a)-Q_{h}^{\star}(s,a) & \geq\begin{cases}
0, & \text{if }Q_{h}^{k}(s,a)\geq H,\\
b_{h}^{k}(s,a)-\big|\big\langle\varphi_{h}(s,a),\theta_{h}^{k}\big\rangle-Q_{h}^{\star}(s,a)\big|\geq0, & \text{else},
\end{cases}
\end{align*}
as claimed in the inequality \eqref{eq:Qk-error} of this lemma.  Consequently, everything boils down to establishing \eqref{eq:error-Qk}, which forms the main content of the next setp.

\paragraph{Step 3: proof of the inequality  \eqref{eq:error-Qk}.}

The proof of this inequality proceeds by bounding each term in the relation \eqref{eqn:eps-decomposition}.

To start with, we establish an upper bound on $\left\|\big(\Lambda_h^{k}\big)^{-1/2}\Phi_h^k\xi_h^k\right\|_2$ 
as required in our advertised inequality~\eqref{eq:error-Qk}; that is, with probability at least $1-\delta,$  
\begin{align}
\label{eqn:xi-bound}
	\left\|\big(\Lambda_j^{k}\big)^{-1/2}\Phi_j^k\xi_j^k\right\|_2
	\leq \sqrt{H^2 \left(d\log (KH) +2\log\frac{KH}{\delta} \right)} 
	\leq 2 \sqrt{H^2 d\log \frac{KH}{\delta} }
\end{align}
holds simultaneously for all $1\leq j \leq H$ and $1\leq k \leq K$. 
Towards this end, let us define 
\begin{align}
	X_i \defn V_{j+1}^{\star}(s_{j+1}^i) - P_{j, s_j^i, a_j^i}V_{j+1}^{\star}.
\end{align}
It is easily seen that $\{X_{i}\}_{i\in \mathcal{I}_{j}^K}$ forms a martingale sequence.
%
%
In addition, we have the trivial upper bound $|X_i| \le H$. 
Therefore, applying the concentration inequality for self-normalized processes (see Lemma~\ref{lem:sum-normalized} in Section~\ref{sec:auxiliary-lemmas}), 
we can deduce that 
\begin{align}
	\left\|\big(\Lambda_j^{k}\big)^{-1/2}\Phi_j^k\xi_j^k\right\|_2 & = 
	\Big\|\big(\Lambda_j^{k}\big)^{-1/2} \sum_{i \in \mathcal{I}_{j}^k} \varphi_j(s_j^i, a_j^i) X_i\Big\|_2 
\le 
\sqrt{H^2\log\left(\frac{\mathrm{det}\left(\Lambda_j^{k}\right)}{\mathrm{det}\left(\Lambda_j^{0}\right)\delta^2}\right)}  \nonumber\\
	&\le \sqrt{H^2 \left(d\log (KH) +2\log\frac{1}{\delta} \right)} 
\label{eqn:xi-bound-pre}
\end{align}
holds with probability at least $1-\delta$.
Here, the first inequality comes from  Lemma~\ref{lem:normalized} in Section~\ref{sec:auxiliary-lemmas},
whereas the second inequality is a consequence of Lemma~\ref{lem:sum-normalized} in Section~\ref{sec:auxiliary-lemmas}.
 Taking the union bound  over $j=1,\ldots, H$ and $k=1,\ldots, K$ yields the required relation~\eqref{eqn:xi-bound}.

Armed with the above inequality, we can move on to establish the inequality~\eqref{eq:error-Qk} by induction, working backwards. 
First, we observe that when $h = H+1$, the inequality~\eqref{eq:error-Qk} holds true trivially (due to the initialization $\theta_{H+1}^k=0$ and the fact $Q_{H+1}^{\star}=0$). 
Next, let us assume that the claim holds for $h+1, \ldots, H+1$, and show that the claim continues to hold for step $h$.
To this end,  it is sufficient to bound the terms in \eqref{eqn:eps-decomposition} separately.

\begin{itemize}

\item {\bf The term $\|\delta_{j}^k\|_2$.}
From the induction hypothesis, the inequality~\eqref{eq:error-Qk} holds for $h+1,\ldots, H+1$.
For any $j$ obeying $h \leq j \leq H$ and any $i\in \mathcal{I}_j^k$, 
this in turn guarantees that
\begin{align*}
	V_{j+1}^{\star}(s_{j+1}^i) & = Q_{j+1}^\star \big( s_{j+1}^i, \pi^\star(s_{j+1}^i) \big) 
	 \leq Q_{j+1}^{i-1} \big( s_{j+1}^i, \pi^\star(s_{j+1}^i) \big) \leq Q_{j+1}^{i-1}(s_{j+1}^i, a_{j+1}^i) \\
	& \leq \big\langle  \varphi_h(s_{j+1}^i, a_{j+1}^i), \theta_{j+1}^{i-1} \big\rangle  + b_{j+1}^{i-1}(s_{j+1}^i, a_{j+1}^i) . 
\end{align*}
%
Here, the second inequality follows since $a_{j+1}^i$ is chosen to be an action maximizing $Q_{j+1}^{i-1}(s_{j+1}^i, \cdot) $. 
In the meantime, the induction hypothesis \eqref{eq:error-Qk} for $h+1,\ldots, H+1$ also implies
\begin{align*}
 Q_{j+1}^{\star}(s_{j+1}^i, a_{j+1}^i) 
	&\ge \big\langle \varphi_h(s_{j+1}^i, a_{j+1}^i), \theta_{j+1}^{i-1} \big\rangle - b_{j+1}^{i-1}(s_{j+1}^i, a_{j+1}^i)
\end{align*}
for all $j$ obeying $h \leq j \leq H$. 
Taken collectively, the above two inequalities demonstrate that
\begin{align*}
0 \le V_{j+1}^{\star}(s_{j+1}^i) - Q_{j+1}^{\star}(s_{j+1}^i, a_{j+1}^i) \le 2b_{j+1}^{i-1}(s_{j+1}^i, a_{j+1}^i),
\qquad
i \in \mathcal{I}_{j}^k,
\end{align*}
where the first inequality holds trivially since $V_{j+1}^{\star}(s)=\max_a Q_{j+1}^{\star}(s,a)$. 
Then, given that $i \in \mathcal{I}_{j}^k$, 
one necessarily has $b_{j+1}^{i-1}(s_{j+1}^i, a_{j+1}^i) < \taugap/2$, 
which combined with the sub-optimality gap assumption \eqref{eq:defn-gap-h} implies that $a_{j+1}^i$ cannot be a sub-optimal action in state $s_{j+1}^i$.
Consequently, we reach
\[
V_{j+1}^{\star}(s_{j+1}^i) - Q_{j+1}^{\star}(s_{j+1}^i, a_{j+1}^i) = 0\qquad\text{for all }i \in \mathcal{I}_{j}^k,
\]
and as a result, 
\begin{align}
 \delta_{j}^k = 0. \label{eqn:delta-bound}
\end{align}

\item {\bf The term $\|\varepsilon_{h+1}^k\|_2$.}
	Recall that the decomposition~\eqref{eqn:decomposition-new} together with the assumption $Q_h^{\star}(s,a)=\langle \varphi_h(s,a), \theta_h^{\star} \rangle$ allows us to write $\varepsilon_h^k$ (cf.~\eqref{eq:defn-epsilon-k}) as follows
\begin{align*}
\varepsilon_h^k = \left(\Phi_h^k\right)^{\top}\left(\theta_h^k - \theta_h^{\star}\right) = \left(\Phi_h^k\right)^{\top}\big(\Lambda_h^{k}\big)^{-1}\Big\{\Phi_h^k\big(\left[\varepsilon_{h+1}^k\right]_{i \in \mathcal{I}_{h}^k}+\delta_{h}^k+\xi_h^k\big) -  \theta_h^{\star}\Big\}.
\end{align*}
Combining this with the basic properties $\|(\Phi_h^k)^{\top}\big(\Lambda_h^{k}\big)^{-1/2}\| \le 1$, $\|\big(\Lambda_h^{k}\big)^{-1/2}\| \le 1$, and $\mathcal{I}_{h}^K \subseteq \mathcal{I}_{h+1}^K$ yields
\begin{align}
\|\varepsilon_h^k\|_2 &= \left\|\left(\Phi_h^k\right)^{\top}\big(\Lambda_h^{k}\big)^{-1}\Big\{\Phi_h^k\big(\left[\varepsilon_{h+1}^k\right]_{i \in \mathcal{I}_{h}^k}+\delta_{h}^k+\xi_h^k\big) -  \theta_h^{\star}\Big\}\right\|_2 \nonumber\\
&\le \left\|\varepsilon_{h+1}^k\right\|_2+\left\|\delta_{h}^k\right\|_2+ \left\|\big(\Lambda_h^{k}\big)^{-1/2}\Phi_h^k\xi_h^k \right\|_2 +  \left\|\theta_h^{\star}\right\|_2.
\end{align}
Applying this inequality recursively leads to 
\begin{align}
\|\varepsilon_h^k\|_2 &\le \left\|\varepsilon_{H+1}^k\right\|_2 + \sum_{h \le j \le H} \Big[\|\delta_{j}^k\|_2+ \left\|\big(\Lambda_h^{k}\big)^{-1/2}\Phi_j^k\xi_j^k \right\|_2 +  \|\theta_j^{\star}\|_2 \Big] \nonumber\\
&\le 4\sqrt{dH^4\log \frac{KH}{\delta}}, \label{eqn:eps-bound}
\end{align}
where the last inequality holds by putting together the property $\varepsilon_{H+1}^k = 0$, the inequalities~\eqref{eqn:xi-bound} and \eqref{eqn:delta-bound}, and the assumption that $\|\theta_j^{\star}\|_2 \leq 2H\sqrt{d}$ (see \eqref{eq:assumptions-phi-theta-size}). 

\end{itemize}

Combining the inequalities~\eqref{eqn:xi-bound}, \eqref{eqn:delta-bound}, \eqref{eqn:eps-bound} with the relation~\eqref{eqn:eps-decomposition}, we arrive at 
\begin{align*}
	\big| \big \langle \varphi_h(s, a), \theta_h^k \big\rangle - Q_{h}^{\star}(s, a)\big|
	%
	\notag &\le 
	\left(4\sqrt{dH^4\log \frac{KH}{\delta}}
	+ 2 \sqrt{H^2 d\log \frac{KH}{\delta}} + 2H\sqrt{d} \right) \big\|\big(\Lambda_h^{k}\big)^{-1/2}\varphi_h(s, a) \big\|_2\\
	&\le c_{\beta} \sqrt{dH^4\log \frac{KH}{\delta}} \big\|\big(\Lambda_h^{k}\big)^{-1/2}\varphi_h(s, a) \big\|_2 = b_h^k(s,a) ,
\end{align*}
provided that $c_{\beta}\geq 8$. 
This completes the induction step of \eqref{eq:error-Qk},  thus concluding the proof of Lemma~\ref{lem:Qk-error}.

\subsection{Proof of Lemma~\ref{lem:number-nodes}}
\label{sec:invierno-porteno}

Consider any $1\leq h<H$. 
In view of the definition \eqref{eqn:def-b-hk} of the UCB bonus $b_h^k$, one obtains
\begin{align}
	\sum_{k: k+1 \in \mathcal{I}_{h+1}^K} \left[ b_{h+1}^{k-1}(s_{h+1}^k, a_{h+1}^k) \right]^2 
	&\le \sum_{k: k+1  \in \mathcal{I}_{h+1}^K} c_{\beta}^2dH^4
	\left[ \varphi_{h+1}(s_{h+1}^k, a_{h+1}^k) \right]^{\top}\big(\Lambda_{h+1}^{k-1}\big)^{-1}\varphi_{h+1}(s_{h+1}^k, a_{h+1}^k)\log \frac{KH}{\delta} \notag\\
	&\le c_{\beta}^2d^2H^4\log \frac{KH}{\delta}  \log \frac{K+d}{d}
	\le c_{\beta}^2d^2H^4\log^2 \frac{KH}{\delta} \eqqcolon B, 
	\label{eq:sum-bk-square-UB1}
\end{align}
where the second inequality comes from a standard result stated in Lemma~\ref{lem:sum-normalized} of Section~\ref{sec:auxiliary-lemmas},
and the last inequality is valid as long as $KH\geq 2$.

As a direct consequence of the upper bound \eqref{eq:sum-bk-square-UB1}, there are no more than  $\frac{B}{(\taugap/2)^2}$ elements in the set $\mathcal{I}_{h+1}^K$ obeying $b_{h+1}^{k-1}(s_{h+1}^k, a_{h+1}^k) \geq \taugap/2$.
This combined with the update rule in Line~\ref{line:select-b} of Algorithm~\ref{alg:linearQ} immediately yields
\begin{align*}
\left|\mathcal{I}_{h+1}^K \setminus \mathcal{I}_{h}^K\right| \le  \frac{B}{(\taugap/2)^2} = \frac{4c_{\beta}^2d^2H^4\log^2 \frac{KH}{\delta}}{\taugap^2}, 
\end{align*}
since, by construction, any $k$ with $k+1$ belonging to $\mathcal{I}_{h+1}^K \setminus \mathcal{I}_{h}^K$ must violate the condition $b_{h+1}^{k-1}(s_{h+1}^k, a_{h+1}^k) < \taugap/2$. 
This completes the proof.

\subsection{Auxiliary lemmas}
\label{sec:auxiliary-lemmas}

In this section, we provide a couple of auxiliary lemmas that have been frequently invoked in the literature of linear bandits and linear MDPs. 
The first result is concerned with the interplay between the feature map and the (regularized) covariance matrix $\Lambda_h^k$. 
\begin{lemma} [\citet{abbasi2011improved}]
\label{lem:sum-normalized} 
Under the assumption~\eqref{eq:assumptions-phi-theta-size} and the
definition~\eqref{eqn:def-Lambda-hk}, the following relationship holds true
\begin{equation}
\sum_{i \in \mathcal{I}_{h}^k} \varphi_h(s_{h}^i, a_{h}^i)^{\top}\big(\Lambda_h^{i-1}\big)^{-1}\varphi_h(s_{h}^i, a_{h}^i) 
\le 
2\log \left(\frac{\mathrm{det}\left(\Lambda_h^{k}\right)}{\mathrm{det}\left(\Lambda_h^{0}\right)}\right) \le 2d\log\left(\frac{k}{d}+1\right).
\end{equation}
\end{lemma}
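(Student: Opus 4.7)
The approach is the standard elliptical potential argument of Abbasi-Yadkori et al., adapted to our indexing by $\mathcal{I}_h^k$. The plan is to use a one-step matrix-determinant identity to convert each quadratic form $\varphi_h(s_h^i,a_h^i)^\top (\Lambda_h^{i-1})^{-1} \varphi_h(s_h^i,a_h^i)$ into a log-determinant increment, telescope, and then invoke AM-GM on the eigenvalues to get the final $d\log(k/d+1)$ bound.

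For the first inequality, write $\varphi_i \defn \varphi_h(s_h^i,a_h^i)$ and $u_i \defn \varphi_i^\top (\Lambda_h^{i-1})^{-1}\varphi_i$. For every $i\in \mathcal{I}_h^k$ one has $\Lambda_h^i = \Lambda_h^{i-1} + \varphi_i\varphi_i^\top$, so the matrix-determinant lemma gives $\det(\Lambda_h^i) = \det(\Lambda_h^{i-1})\,(1+u_i)$. Because $\Lambda_h^0 = I$ implies $\Lambda_h^{i-1}\succeq I$ for all $i$, and since $\|\varphi_i\|_2 \le 1$ by assumption~\eqref{eq:assumptions-phi-theta-size}, we have $u_i\in[0,1]$. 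On this interval the scalar inequality $x \le 2\log(1+x)$ holds, hence
\begin{equation*}
\sum_{i\in\mathcal{I}_h^k} u_i \;\le\; 2\sum_{i\in\mathcal{I}_h^k}\log(1+u_i) \;=\; 2\sum_{i\in\mathcal{I}_h^k}\log\frac{\det(\Lambda_h^i)}{\det(\Lambda_h^{i-1})} \;=\; 2\log\frac{\det(\Lambda_h^k)}{\det(\Lambda_h^0)},
\end{equation*}
where in the last step the sum telescopes across consecutive updates (indices outside $\mathcal{I}_h^k$ leave $\Lambda_h$ unchanged and contribute $\log 1 = 0$).

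For the second inequality, note that $\Lambda_h^k = I + \sum_{i\in \mathcal{I}_h^k}\varphi_i\varphi_i^\top$ is positive definite with trace
\begin{equation*}
\mathrm{tr}(\Lambda_h^k) \;=\; d + \sum_{i\in\mathcal{I}_h^k}\|\varphi_i\|_2^2 \;\le\; d + |\mathcal{I}_h^k| \;\le\; d + k.
\end{equation*}
By AM-GM applied to the $d$ positive eigenvalues of $\Lambda_h^k$, $\det(\Lambda_h^k)\le (\mathrm{tr}(\Lambda_h^k)/d)^d \le (1+k/d)^d$, and since $\det(\Lambda_h^0)=\det(I)=1$, we obtain $\log\bigl(\det(\Lambda_h^k)/\det(\Lambda_h^0)\bigr)\le d\log(1+k/d)$, which combined with the previous display yields the claim.

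There is no real obstacle here; the only subtle point is verifying $u_i \le 1$ so that the scalar inequality $x \le 2\log(1+x)$ applies, which is exactly why the regularization $\Lambda_h^0 = I$ (rather than $0$) together with the feature-norm bound $\|\varphi_i\|_2\le 1$ was built into the definition~\eqref{eqn:def-Lambda-hk}.
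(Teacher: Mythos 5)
Your proof is correct and follows essentially the same route as the paper: the second inequality is verbatim the paper's trace/AM--GM argument on the eigenvalues of $\Lambda_h^k$, and the first inequality is the standard elliptical-potential bound from Abbasi-Yadkori et al., which the paper simply cites rather than derives. Your only addition is to spell out that cited step --- the rank-one determinant identity, the telescoping over $\mathcal{I}_h^k$ (with the correct observation that indices outside $\mathcal{I}_h^k$ leave $\Lambda_h$ unchanged), and the verification that $u_i\le 1$ via $\Lambda_h^{i-1}\succeq I$ and $\|\varphi_i\|_2\le 1$ so that $x\le 2\log(1+x)$ applies --- all of which is accurate.
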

\begin{proof}
	The first inequality is an immediate consequence of \citet{abbasi2011improved} or \citet[Lemma D.2]{jin2020provably}. 
	Regarding the second inequality, let $\lambda_i\geq 1$ be the $i$-th largest eigenvalue of the positive-semidefinite matrix $\Lambda_h^{k}$. 
	From the AM-GM inequality, it is seen that
	\begin{align}
		\mathrm{det}\left(\Lambda_h^{k}\right) = \prod_{i=1}^d \lambda_i \leq \left( \frac{\sum_{i=1}^d \lambda_i }{d} \right)^d 
		\leq \left( \frac{k+d}{d} \right)^d,
	\end{align}
	where the last inequality arises since (in view of the assumption \eqref{eq:assumptions-phi-theta-size})
	\[
		\sum_{i=1}^d \lambda_i = \mathsf{Tr}\left(\Lambda_h^{k}\right) = d + \sum_{i\in \mathcal{I}_{h}^k} \big\| \varphi_h(s_{h}^i, a_{h}^i) \big\|_2^2 \leq d+ k. 
	\]
\end{proof}

The second result delivers a concentration inequality for the so-called self-normalized processes. 
\begin{lemma}[\citet{abbasi2011improved}] \label{lem:normalized}
	Assume that $\left\{X_t \in \mathbb{R}\right\}_{t = 1}^{\infty}$ is a martingale w.r.t.~the filtration $\left\{\mathcal{F}_t\right\}_{t = 0}^{\infty}$ obeying
\begin{align*}
\mathbb{E}\left[X_t \mymid \mathcal{F}_{t-1}\right] = 0, \qquad\text{and}\qquad \mathbb{E}\left[e^{\lambda X_t} \mymid \mathcal{F}_{t-1}\right] \le e^{\frac{\lambda^2\sigma^2}{2}},~\forall~\lambda \in \mathbb{R}.
\end{align*}
In addition, suppose that $\varphi_t \in \mathbb{R}^d$ is a random vector over $\mathcal{F}_{t-1}$, and define $\Lambda_t = \Lambda_0 + \sum_{i = 1}^t \varphi_i\varphi_i^{\top} \in \mathbb{R}^{d\times d}$.
Then with probability at least $1-\delta$, it follows that
\begin{align*}
	\Big\|\left(\Lambda_t \right)^{-1/2} \sum_{i = 1}^t \varphi_iX_i\Big\|_2^2 \le \sigma^2\log\left(\frac{\mathrm{det}(\Lambda_t)}{\mathrm{det}(\Lambda_0)\delta^2}\right)  \qquad \text{for all }t\geq 0.
\end{align*}
\end{lemma}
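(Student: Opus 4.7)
The plan is to follow the method-of-mixtures argument of Abbasi-Yadkori, P\'al, and Szepesv\'ari. Let $S_t \defn \sum_{i=1}^t \varphi_i X_i$ and $V_t \defn \sum_{i=1}^t \varphi_i \varphi_i^\top$, so that $\Lambda_t = \Lambda_0 + V_t$. For any \emph{fixed} vector $\eta \in \mathbb{R}^d$, I will first consider the exponential process
\[
	M_t^\eta \defn \exp\!\left( \frac{1}{\sigma}\eta^\top S_t - \frac{1}{2}\eta^\top V_t \eta \right),
	\qquad M_0^\eta = 1.
\]
Because $\varphi_t$ is $\mathcal{F}_{t-1}$-measurable and the conditional sub-Gaussian hypothesis yields $\mathbb{E}\bigl[\exp(\tfrac{1}{\sigma}\eta^\top \varphi_t X_t) \mid \mathcal{F}_{t-1}\bigr] \le \exp(\tfrac{1}{2}\eta^\top \varphi_t \varphi_t^\top \eta)$, an elementary step gives $\mathbb{E}[M_t^\eta \mid \mathcal{F}_{t-1}] \le M_{t-1}^\eta$, so $\{M_t^\eta\}$ is a nonnegative supermartingale.

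Next I will apply the method of mixtures: introduce an auxiliary Gaussian random vector $\eta \sim \mathcal{N}(0, \Lambda_0^{-1})$ drawn independently of everything else, and define the mixture
\[
	\overline{M}_t \defn \mathbb{E}_\eta\bigl[ M_t^\eta \bigr]
	= \frac{1}{\sqrt{(2\pi)^d \det(\Lambda_0^{-1})}}\int_{\mathbb{R}^d} \exp\!\left( \tfrac{1}{\sigma}\eta^\top S_t - \tfrac{1}{2}\eta^\top (V_t + \Lambda_0)\eta \right) d\eta.
\]
Fubini and the supermartingale property of each $M_t^\eta$ ensure that $\overline{M}_t$ is itself a nonnegative supermartingale with $\mathbb{E}[\overline{M}_0] = 1$. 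The key computation is the Gaussian integral, which after completing the square in the exponent and using $V_t + \Lambda_0 = \Lambda_t$ yields the closed form
\[
	\overline{M}_t = \sqrt{\frac{\det(\Lambda_0)}{\det(\Lambda_t)}} \cdot \exp\!\left( \frac{1}{2\sigma^2} S_t^\top \Lambda_t^{-1} S_t \right).
\]

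Finally, I will apply Ville's maximal inequality to the nonnegative supermartingale $\overline{M}_t$: $\mathbb{P}\bigl(\sup_{t \ge 0}\overline{M}_t \ge 1/\delta \bigr) \le \delta$. Substituting the closed form and taking logarithms, on the complementary event of probability at least $1-\delta$ one has, simultaneously for all $t \ge 0$,
\[
	\frac{1}{2\sigma^2}\, S_t^\top \Lambda_t^{-1} S_t \;\le\; \log\frac{1}{\delta} + \frac{1}{2}\log\frac{\det(\Lambda_t)}{\det(\Lambda_0)},
\]
which rearranges to the advertised bound $\bigl\|\Lambda_t^{-1/2} \sum_{i=1}^t \varphi_i X_i\bigr\|_2^2 \le \sigma^2 \log\bigl( \det(\Lambda_t) / (\det(\Lambda_0)\delta^2) \bigr)$.

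The main obstacle is the Gaussian integral evaluation: one must complete the square carefully so that the quadratic term becomes $\tfrac{1}{2}(\eta - \tfrac{1}{\sigma}\Lambda_t^{-1} S_t)^\top \Lambda_t (\eta - \tfrac{1}{\sigma}\Lambda_t^{-1} S_t)$ and the leftover constant becomes $\tfrac{1}{2\sigma^2} S_t^\top \Lambda_t^{-1} S_t$; once this algebra is in place, the supermartingale property is straightforward and Ville's inequality finishes the argument uniformly in $t$. A minor subtlety is that Ville's inequality is usually stated for $t$ ranging over $\mathbb{N}$, but the uniform-in-$t$ bound here applies verbatim since the supermartingale is defined on the same discrete filtration.
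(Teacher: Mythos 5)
The paper does not prove this lemma itself but imports it verbatim from \citet{abbasi2011improved}, and your argument is a correct reconstruction of exactly the proof given there: the supermartingale $M_t^{\eta}$, the Gaussian mixture with prior $\mathcal{N}(0,\Lambda_0^{-1})$ (which requires $\Lambda_0\succ 0$, satisfied here since $\Lambda_0=I$), the closed-form evaluation of $\overline{M}_t$, and Ville's maximal inequality. Since this is essentially the same route as the cited source, there is nothing further to flag.
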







\bibliography{bibfileRL}
\bibliographystyle{apalike}

\end{document}